\theoremstyle{definition}
\newtheorem{theorem}{Theorem}
\newtheorem{assumption}{Assumption}
\newtheorem{lemma}{Lemma}
\undefined \usepackage{pifont} \fi
\newcommand{\fB}{\mathbf{f}}
\newcommand{\xB}{\mathbf{x}}
\newcommand{\KB}{\mathbf{K}}
\newcommand{\betaB}{\boldsymbol{\beta}}
\newcommand{\muB}{\boldsymbol{\mu}}
\newcommand{\Dcal}{\mathcal{D}}
\newcommand{\Ecal}{\mathcal{E}}
\newcommand{\Ncal}{\mathcal{N}}
\newcommand{\Rcal}{\mathcal{R}}
\newcommand{\Xcal}{\mathcal{X}}
\newcommand{\Ycal}{\mathcal{Y}}
\newcommand{\betaBRV}{\mbox{\boldmath \Pifont{psy}b}}
\newcommand{\argmax}{\text{argmax}}
\newcommand{\GP}{\mathcal{GP}}
\title{Domain Invariant Learning for Gaussian Processes and Bayesian Exploration}
\author {
    Xilong Zhao\textsuperscript{\rm 1}\thanks{Xilong Zhao is currently a student at MoE Key Lab of Artificial Intelligence, AI Institute, Shanghai Jiao Tong University, Shanghai 200240, China.},
    Siyuan Bian\textsuperscript{\rm 1},
    Yaoyun Zhang \textsuperscript{\rm 1},
    Yuliang Zhang \textsuperscript{\rm 1},
    Qinying Gu \textsuperscript{\rm 2},
    Xinbing Wang \textsuperscript{\rm 1},
    Chenghu Zhou \textsuperscript{\rm 1},
    Nanyang Ye \textsuperscript{\rm 1}\thanks{Corresponding author.}
}
\begin{document}

\maketitle

\begin{abstract}
Out-of-distribution (OOD) generalization has long been a challenging problem that remains largely unsolved. Gaussian processes (GP), as popular probabilistic model classes, especially in the small data regime, presume strong OOD generalization abilities. Surprisingly, their OOD generalization abilities have been under-explored before compared with other lines of GP research. In this paper, we identify that GP is not free from the problem and propose a domain invariant learning algorithm for Gaussian processes (DIL-GP) with a min-max optimization on the likelihood. DIL-GP discovers the heterogeneity in the data and forces invariance across partitioned subsets of data. We further extend the DIL-GP to improve Bayesian optimization's adaptability on changing environments. Numerical experiments demonstrate the superiority of DIL-GP for predictions on several synthetic and real-world datasets. We further demonstrate the effectiveness of the DIL-GP Bayesian optimization method on a PID parameters tuning experiment for a quadrotor. 
The full version and source code are available at: https://github.com/Billzxl/DIL-GP.
\end{abstract}

\section{Introduction}
Gaussian processes (GP) have been widely used as models for Bayesian nonparametrics methods in machine learning \cite{ NEURIPS2022_6dd16c88, NEURIPS2022_2b2bf329}. However, the extrapolation ability of GP especially in the long tail regime suffers from generalization problems, i.e. failing on the non-i.i.d test data \cite{PILARIO2022100036}. How to guarantee the generalization ability of GP on data sampled from out-of-distribution sets is of great importance, also known as the \emph{out-of-distribution (OOD) generalization.} Previous GP methods mitigate the OOD generalization problems often by hand-crafting problem-specific kernel functions to capture the inherent long-tail characteristics of the data. While these methods are effective, it is hard to transfer the success to other types of problems \cite{PILARIO2022100036, Schmidt2023, chen4166591compressible, NEURIPS2022_99c80ceb}. This can also lead to degenerated sample efficiency of Bayesian optimization when GP is used as the surrogate function \cite{lei2021bayesian}.

To improve OOD generalization, many domain generalization methods have been proposed to learn invariant representations across different domains. This can be achieved by invariant risk minimization (IRM) \cite{arjovsky2019invariant}, which attempts to tackle this issue by penalizing predictions based on the unstable spurious features in the data collected from different domains. Although methods like IRM seems promising for OOD generalization, data need to be manually split into different domains to enable them to minimize the performance discrepancies across domains. This fundamental restrictions prevent this algorithmic schemes to be practical. First, how data are split can largely influence the final prediction performances. Second, sometimes it is impossible to split the data in continuous environments, especially for Bayesian optimization scenario, where the data is sequentially sampled from trials, such as sensor data collected on a drone flight under turbelent stormy weather.

In this paper, we propose a novel approach, domain invariant learning for Gaussian processes (DIL-GP), to iteratively construct worst-case domains/environments splittings for IRM for learning GP parameters. The optimized partition for data samples enforce GP to generalize on the worst possible data distributions thus benefit the OOD generalization. Then, DIL-GP is further extended as the surrogate function for Bayesian optimization. We demonstrate that the combination of this adversarial partitioning and IRM is key for learning generalizable GP kernels. Numerical experiments on synthetic and real-world datasets shows the superiority of the proposed algorithmic scheme. 

\section{Related Work}
\subsection{Gaussian Processes}

Gaussian Process(GP), as a non-parametric Bayesian model that models the relationship between data by defining a stochastic process  has the characteristics of non-parametricity, flexibility, interpretability, and uncertainty quantification. GP has demonstrated utility on diverse domains, including but not limited to regression analysis~\cite{JMLR:v23:19-597,lalchand2022sparse, zhang2022byzantinetolerant,pmlr-v139-artemev21a}, classification tasks~\cite{NEURIPS2021_50d2e70c,NEURIPS2021_e7e69cdf,pmlr-v139-achituve21a} and optimization~\cite{NEURIPS2021_3cec07e9,pmlr-v139-cai21f}. To improve GP's generalization ability, previous works focus on improving GP's kernel functions, especially for computation efficiencies. For example, in \cite{cohen2022loglineartime}, a binary tree kernel is proposed to improve its computation efficiency \cite{pmlr-v162-lu22b}. In \cite{jorgensen2022bezier}, a Bézier kernel is introduced that requires only linear cost in both number of observations and input features. This scaling is achieved through the introduction of the Bézier buttress, which allows approximate inference without computing matrix inverses or determinants.
Another line of works focus on improving GP's generalization abilities for specific applications. Shared GP kernels are proposed for multiple users for federated learning \cite{NEURIPS2021_46d0671d}. In \cite{chen2023metalearning}, meta learning is used to learn adaptable GP kernel parameters for molecular property prediction. Differential equation interactions are modeled with GP for physics process predictions in \cite{pmlr-v162-long22a}. Despite their domain-specific successes, general method to improve GP's generalization abilities, especially on out-of-distribution data, is largely under-explored.

\subsection{Out-of-Distribution Generalization}
Out-of-distribution (OOD) generalization, the task of generalizing under distribution shifts, has been researched in many areas, such as computer vision \cite{ZhangZLWSX22, HuangWHLX22, WangYHSZ21, Niu0ZZ22, NEURIPS2022_023d94f4, NEURIPS2022_0b5eb45a, NEURIPS2022_372cb780, NEURIPS2022_4730d10b}, natural language processing \cite{yang2022glue, hendrycks2020pretrained, sun2022counterfactual}, and speech recognition \cite{lu2022out, adolfi2023successes, NEURIPS2022_4730d10b}. Previous works mainly focus on improving the out-of-distribution generalization ability for deep neural networks. For example, \cite{HuangWHLX22} propose a deep neural network training strategy that randomly drop the most prominent feature for image classification in each iteration.  \cite{NEURIPS2022_cd305fde} propose to ensemble multiple deep neural networks each trained on a subset of the data to improve their out-of-distribution generalization abilities on image classification tasks. While these methods provide insights into how to improve the OOD generalization abilities of deep neural networks, they are reliant on neural architectures not suitable for GP. Besides, typical OOD generalization methods relies on the domain labels to learn invariant features across domains to achieve OOD generalization while the domain labels are hard or impossible to get in some scenarios where GP are widely used \cite{arjovsky2019invariant}. For example, in the small data regime when data collection is expensive, obtaining domain labels will be even harder. Besides, many existing datasets containing domain shifts may come without providing domain labels. In applications such as Bayesian optimization, where the data are actively sampled , the domain labels are also hard to obtain.

\section{Methodology}
We aim to learn a function $f: \Xcal \rightarrow \Ycal$ from $\Xcal \subseteq \Rcal^{d}$ to $\Ycal \subseteq \Rcal$ given the training set with $N$ sample pairs $X=(x_1, \cdots, x_n) \in \Rcal^{n \times d}$ of $n$ inputs with $x_j \in \Rcal^{d}$ and corresponding outputs $y=(y_1, \cdots, y_n) \in \Rcal^{n}$. 

\subsection{Gaussian Process} Gaussian process is a stochastic process $f \sim \GP(\mu, k_\theta)$ with mean function $\mu$ and kernel $k_{\theta}: \Rcal^{d} \times \Rcal^{d} \rightarrow \Rcal$ \cite{GPML}. The kernel is parameterized with $\theta \in \Rcal^{\Theta}$. The set of function values $\fB = (f(x_1), \cdots, f(x_n))^{T} \sim \Ncal(\muB, \KB_{\theta})$ is a joint Gaussian with $\muB_j=\mu(x_j)$ and $\KB_{ij}=k_{\theta}(x_i, x_j)$. In order to predict the corresponding output $y^{*}$ given the input datum $x^{*}$, the output $p(y^{*}|X, y, x^{*})=\Ncal(\mu^{*}, K^{*})$ is a Gaussian with mean and variance given by:
\begin{align}
&\mu^{*} = \mu(x^{*}) + \KB_{\theta}^{*x} (\KB_{\theta}^{xx} + \sigma^2 I)^{-1} ( y - \mu(x^{*}))\\
&K^{*} = \KB_{\theta}^{**} - \KB_{\theta}^{*x}(\KB_{\theta}^{xx} + \sigma^2 I)^{-1} \KB_{\theta}^{x*}
\end{align}
where $\KB_{\theta}^{**} = [k(x^{*}, x^{*})]$ and $\KB_{\theta}^{*x}=[k(x^{*}, x_1), \cdots ,\\ k(x^{*}, x_{n})]^{T}$ denotes the kernel matrix and similarly for $\KB_{\theta}^{xx}$ and $\KB_{\theta}^{x*}$. $I$ is the identity matrix. The kernel parameters $\theta$ can largely determine the performance of the GP, and the optimal kernel parameters can be obtained by the maximum posterior likelihood method:
\begin{equation}
\begin{aligned}
\log p(y|x, \theta) = &-\frac{1}{2}(y-\mu(x))^{T} (\KB_{\theta}^{xx} + \sigma^2 I )^{-1} (y-\mu(x)) \\
&-\frac{1}{2} \log |\KB_{\theta}^{xx} +\sigma^2 I| - \frac{n}{2} \log(2\pi)
\end{aligned}
\end{equation}
where the first term measures the goodness of fit, the second term is the regularization for the kernel complexity, and the last term is the normalizing constant. 

\subsection{Data Distribution Shifts Challenges} 
Data distribution shifts widely exist in real world scenarios as it is practically impossible to collect data from all domains. For example, a computer disease diagnosis system trained in one hospital's data has to generalize to other hospitals. This also pose challenges for GPs, suppose data is sampled from different environments/domains $e \in \Ecal$, $x \sim p(x|e) $. The average negative likelihood (loss) on training and test domains are given below:
\begin{align}
    \text{NLL}_{\text{train}}(\theta)=- \int p_{\text{train}}(e) p(x|e) \log p(y|x, \theta, e)  
    \\ \text{NLL}_{\text{test}}(\theta)=- \int p_{\text{test}}(e) p(x|e) \log p(y|x, \theta, e)
\end{align}
It is obvious to see the minimizer of $\text{NLL}_{\text{train}}$ is not necessarily suitable for $\text{NLL}_{\text{test}}$. This poses challenges for GP's generalization abilities. To solve this problem, an intuitive way is to minimize the negative likelihood on different domains at the same time. For example, the distributional robustness optimization method seeks to optimize on the worst environment's data to achieve OOD generalization \cite{Sagawa2020Distributionally}. Invariant risk minimization enforce models to reach local minima for each domain to learn domain invariant representations \cite{arjovsky2019invariant}.  However, these approaches all require prior knowledge about which domain the data belongs to. Next we will present algorithm for inferring domains directly from data without human assigned environment prior.

\subsection{Inferring Domains for Domain Invariant Learning}
Intuitively, as data are sampled from the mixture of different domains, if the model can perform well on the worst possible domain, the model will be suitable for other domains. As the domain information is unknown in our setting, we partition the data into different subsets to construct domains for domain invariant learning. The data are partitioned to create challenges for invariant learning methods. Here we use the invariant risk minimization penalty as in \cite{arjovsky2019invariant} but other variants can also be used. We denote the domain index for each data as $q_i \in [0, \cdots, \Ecal]$ to represent which domain the data pair $(x_i, y_i)$  belongs to.  We first give the likelihood on environment $e$:

\begin{equation}
    \begin{aligned}
\log p(&y|x, \theta, e) =  -\frac{1}{2}((y-\mu(x))*\mathds{1}
(q=e))^{T} (\KB_{\theta}^{xx} + \sigma^2 I )^{-1} \\
&((y-\mu(x))*\mathds{1}
(q=e)) - \frac{1}{2} \log |\KB_{\theta}^{xx} + \sigma^2 I| - \frac{n}{2} \log(2\pi)  
\end{aligned}
\end{equation}

where $*$ is the element-wise multiplication operation,  $q=[q_1, q_2, \cdots, q_n]^{T}$, $\mathds{1}$ is the identity function. Then our objective is to maximize the invariant risk minimization penalty with regard to the domain index $q$:

\begin{align}
\max_{q} \sum_{e \in \text{supp}(e)} \|\nabla_{w}|_{w=1} \log p(y|x, w*\theta, e) \|^2
\label{eq:innermax}
\end{align}

Where $\text{supp}(e)$ represents the set of possible values for the environment variable $e$. As $q_i$ is a positive discrete variable, the above optimization problem is NP-hard problem. We fix $\Ecal$ to be two and use a soft subsitute for $\mathds{1}
(q=e)$: $\mathds{1}(q=e) \approx \text{sigmoid}(\tilde{q}), \tilde{q} \in \Rcal^{n}$. This facilities the gradient computation and leads to stable empirical performances.

\subsection{Domain Invariant Learning for Gaussian Processes}
After the domain labels are inferred, we finally derive the domain invariant learning for Gaussian processes (DIL-GP). The algorithm is shown in Algorithm~\ref{alg:DIL-GP}. Compared with the vanilla GP, our proposed algorithmic scheme is less prone to over-fitting to the majority group of the data and can provide fairer and better performances especially in novel domains.  As one of the most important applications of GP, Bayesian optimization is widely used for black-box optimization. The generalization ability of surrogate model on unseen data is pivotal for the success of Bayesian optimization \cite{hvarfner2022joint,NEURIPS2021_11704817,NEURIPS2021_a8ecbaba,NEURIPS2021_68521755}. We extend DIL-GP for Bayesian optimization to further demonstrate its OOD generalization ability.

\begin{algorithm}[!ht]
\caption{Domain Invariant Learning for Gaussian Processes}
\label{alg:DIL-GP}
\begin{algorithmic}[1]
\Require {Input data pairs $\{x_i, y_i\}, i=1,\cdots, n$, number of inner maximization steps $T_1$, number of outer minimization steps $T_2$, initial kernel functions $K_{\theta}(\cdot, \cdot)$ parameterized by $\theta$, inner maximization learning rate $\eta_1$, outer minimization learning rate $\eta_2$, IRM penalty coefficient $\lambda$.}
\Ensure {Optimized kernel parameters $\theta$} 
\State {Initialize kernel matrix $K_{\theta}^{xx}$ with input data $\{x_i, y_i\}, i=1,\cdots, n$}
\State {Initialize the domain index vector $\tilde{q}$}
\For {$t_{\text{inner}} = 1$ \textbf{to} $T_1$} 
\For {$t_{\text{outter}} = 1$ \textbf{to} $T_2$}
\State Update $\tilde{q}$ with gradient ascent according to Eq~\ref{eq:innermax}:
\begin{equation}
\tilde{q}_{t+1} \leftarrow \tilde{q}_{t} + \eta_1 \nabla_{\tilde{q}}\sum_{e \in \text{supp}(e)} \|\nabla_{w}|_{w=1} \log p(y|x, w*\theta, e) \|^2
\nonumber
\end{equation}
\EndFor
\State Update $\theta$ with gradient descent on the likelihood with the IRM penalty term:

\begin{equation}
\begin{aligned}
\theta_{t+1} \leftarrow &\theta_{t} - \eta_2  \nabla_{\theta} \Biggl[\log p(y|x,\theta) \\
&+ \lambda \sum_{e \in \text{supp}(e)} \|\nabla_{w}|_{w=1} \log p(y|x, w*\theta, e)  \|^2 \Biggr] \nonumber
\end{aligned}
\end{equation}

\EndFor
\State Output optimized kernel parameters $\theta$.
\end{algorithmic}
\end{algorithm}

\begin{theorem}
Under mild assumptions, given $\delta \in (0, 1)$, DIL-GP's  OoD risk is strictly no larger than vanilla GP's OoD risk $R_{\text{DIL-GP}}= \mathds{E}_{x^*}(\mu_{\text{DIL-GP}}(x^*)-f(x^*))^2 \leq R_{\text{GP}}= \mathds{E}_{x^*}(\mu_{\text{GP}}(x^*)-f(x^*))^2$, with probability $\geq 1-\delta$.
\label{theorem1}
\end{theorem}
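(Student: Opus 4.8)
The plan is to reduce the stated risk comparison to a comparison of worst-case domain risks, exploit the min-max structure of Algorithm~\ref{alg:DIL-GP} to order the two learned parameter vectors, and finally convert the resulting \emph{empirical} ordering into a \emph{population} one through a concentration argument, which is where the confidence level $1-\delta$ will enter. First I would write both predictive means via the posterior-mean formula so that $R_{\text{GP}}$ and $R_{\text{DIL-GP}}$ differ only through the learned kernel parameters $\theta_{\text{GP}}$ and $\theta_{\text{DIL-GP}}$. I would then introduce a per-domain risk $R_e(\theta)$ and invoke the \emph{mild assumption} that the test distribution is covered by the inferred domains: that $p_{\text{test}}$ lies in the set of mixtures spanned by the elements of $\text{supp}(e)$, or equivalently has a bounded density ratio against the training mixture. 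Under this assumption the test risk is dominated by the worst-case domain risk, $R_{\text{test}}(\theta) \leq \max_{e \in \text{supp}(e)} R_e(\theta)$, so the whole comparison collapses to comparing worst-case domain risks at the two parameter vectors.

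The key structural step is to order those worst-case risks. Here I would use that $\theta_{\text{GP}}$ is a feasible point of DIL-GP's outer minimization, and that the inner maximization over the soft partition $\tilde q$ is exactly a search for the split maximizing the IRM violation $\sum_{e}\|\nabla_{w}|_{w=1}\log p(y\mid x,w*\theta,e)\|^2$. Since DIL-GP minimizes $\log p(y\mid x,\theta)$ augmented with $\lambda$ times this worst-case penalty while vanilla GP minimizes the likelihood term alone, the optimality of $\theta_{\text{DIL-GP}}$ forces its worst-case invariance penalty below that achieved at $\theta_{\text{GP}}$. Combining control of the penalty term with the likelihood term, I would conclude that the empirical worst-case domain risk at $\theta_{\text{DIL-GP}}$ is no larger than at $\theta_{\text{GP}}$.

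Third, I would close the empirical-to-population gap. Using boundedness of the GP predictor over a compact input domain (a second mild assumption) together with McDiarmid's inequality, or alternatively a Rademacher-complexity bound over the class of GP predictors indexed by $\theta$, I would show that with probability at least $1-\delta$ the empirical worst-case risks concentrate around their population counterparts up to a term vanishing in $n$. Provided the penalty-induced margin from the second step exceeds this deviation, the empirical ordering transfers to $R_{\text{DIL-GP}} \leq R_{\text{GP}}$, which is the claimed bound.

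The hard part will be the second step: rigorously linking the gradient-norm IRM penalty, which measures \emph{invariance} rather than risk, to an actual decrease in worst-case squared-error risk. The penalty is only a surrogate, so translating a small invariance violation into a smaller worst-case risk will require either a structural/linear-representation assumption on $f$ as in the original IRM analysis, or an argument that the invariant predictor coincides with the Bayes-optimal predictor on the covered test distribution. Identifying precisely which \emph{mild assumptions} render this implication valid, without making the conclusion vacuous, is the central obstacle; the concentration bound and the coverage reduction are comparatively routine.
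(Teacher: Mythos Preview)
Your plan takes a genuinely different route from the paper. The paper's proof does not touch the min-max structure of Algorithm~\ref{alg:DIL-GP}, the IRM penalty, or any concentration-of-measure argument. Instead it works spectrally: via Mercer's theorem it expands $k_\theta$ in eigenfunctions $\{\phi_p\}$, writes $f=\mu_0\phi_0+\sum_{p\geq 1}\mu_p\phi_p$ with $\phi_0$ orthogonal to the kernel's eigenfunction span, and proves a lemma (under decay assumptions $\lambda_p\asymp p^{-\alpha}$, $|\mu_p|\asymp p^{-\beta}$) showing that $\mathds{E}_{x^*}(\mu(x^*)-f(x^*))^2=\mu_0^2+o(1)$. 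The risk comparison then falls out of a structural \emph{assumption} (their Assumption~1, part~4): the vanilla-GP eigenfunction basis $\{\phi_i\}$ is contained in the union $\bigcup_e\{\phi_i^e\}$ of environment-wise bases that DIL-GP perceives, so DIL-GP's residual coefficient is no larger than vanilla GP's. The probability $1-\delta$ enters only through the high-probability rate statements inside the lemma, not through an empirical-to-population concentration step.

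The gap you flag in your second step is real, and there is an additional error in your sketch of it. From ``DIL-GP minimizes NLL plus $\lambda$ times the worst-case penalty while vanilla GP minimizes NLL alone'' you cannot conclude that the worst-case penalty at $\theta_{\text{DIL-GP}}$ is below that at $\theta_{\text{GP}}$: optimizing a \emph{sum} gives no ordering on either summand individually, so the penalty at $\theta_{\text{DIL-GP}}$ could well be larger if compensated by a smaller NLL term. Even granting that ordering, converting a small invariance-gradient penalty into a small worst-case squared-error risk is exactly the surrogate-to-risk implication you correctly identify as the central obstacle, and the paper supplies nothing to bridge it --- it simply assumes the richer-basis property and bypasses the optimization analysis entirely. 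Your route is more faithful to what the algorithm actually does, but as it stands it would require new machinery that neither you nor the paper provides.
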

The proof of this theorem can be found in the Appendix.

\begin{figure*}[!ht]
\centering
  \includegraphics[width=1\textwidth]{./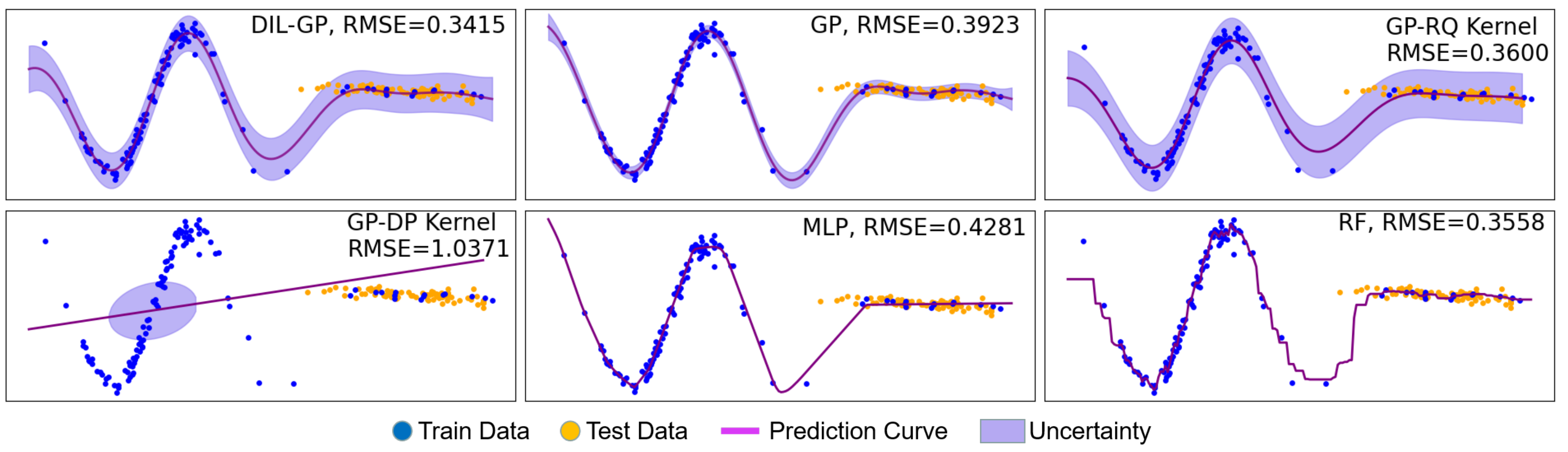}

  \caption{Results on the one-dim synthetic dataset.
  }
  \label{gp_syn_result}
\end{figure*}

\begin{table*}[!ht]
    \begin{center}
    {
        \begin{tabular}{l|c|c|c|c}
            \toprule 
             & \multicolumn{2}{c|}{\textbf{1-Dim Dataset}} & \multicolumn{2}{c}{\textbf{2-Dim Dataset}} \\
            \midrule
            Method & RMSE  & Coverage Rate & RMSE  & Coverage Rate \\
            \midrule
            Random Forest& 0.3558 $\pm$ 0.0840 & - & 0.7938 $\pm$ 0.1841 & -  \\
            MLP & 0.4321 $\pm$ 0.0427 & - & 0.9133 $\pm$  0.4151 & -  \\
            GP & 0.3923 $\pm$ 0.0153 &  0.9125 $\pm$ 0.0374& 0.8022 $\pm$  0.1373 &  0.9225 $\pm$ 0.0537 \\
            GP-RQ Kernel & 0.3600 $\pm$ 0.0064& 0.9625 $\pm$ 0.1256  & 0.7921 $\pm$ 0.2107& 0.8732 $\pm$ 0.1436 \\
            GP-DP Kernel & 1.0371 $\pm$ 0.0021& 0 $\pm$ 0& 0.7133 $\pm$ 0.0096& 0.6339 $\pm$ 0.2311   \\
            \midrule 
            \textbf{DIL-GP} & \textbf{0.3415 $\pm$ 0.0109} & 0.9625 $\pm$ 0.1000& \textbf{0.6583 $\pm$ 0.0230} & 0.8822 $\pm$ 0.0775  \\
            \bottomrule
        \end{tabular}

    }
\caption{Quantitative comparison between different methods on synthetic datasets (Mean $\pm$ Max deviation over 5 runs)}
\label{table:syn2}
\end{center}
\end{table*}{}

\subsection{Bayesian Optimization with DIL-GP}
To adapt surrogate model in Bayesian optimization in fast-changing environments, we propose to incorporate DIL-GP as the surrogate model for Bayesian optimizations. Previous Bayesian optimization methods tend to ignore the domain shifts existed in the sampled data thus the optimized black-box model may perform poorly in novel domains. The DIL Bayesian optimization algorithm is shown in Algorithm~\ref{alg:DILGP-BO}. Here, $D_t$ is a data set that includes all queried data points $(x_i, f(x_i))$ so far, $\alpha(x \mid D_t)$ is an acquisition function used to determine the next point to query given the current Gaussian process model. Commonly used acquisition functions include expected improvement (EI) and upper confidence bound (UCB). At each iteration, the algorithm fits a Gaussian process model to the current data set, selects the next point to query based on the acquisition function, and adds the new data point to the data set. The algorithm continues until a stopping criterion is met, and the optimal solution $\hat{x}$ is the point with the highest function value among all known data points.

\begin{algorithm}[!ht]
\caption{DIL-GP Bayesian Optimization}
\label{alg:DILGP-BO}
\begin{algorithmic}[1]
\Require {Black-box function to be optimized $f(\beta)$, Search space $\betaBRV$, Acquisition function $\alpha$,number of Bayesian optimization steps $T_{\text{BO}}$.}
\Ensure {Optimized solution $\beta$} 
\State {Randomly sample initial trial points $\betaB \sim \betaBRV$.}
\State {Compute the initial queried data point set $D_0 = {(\betaB,f(\betaB)}))$}
\For { $t = 0$  \textbf{to} $T_{\text{BO}}-1$} 
\State Use Algorithm~\ref{alg:DIL-GP} to fit a GP model: $\text{DIL-GP}(D_t)$;
\State Find the next trial point with $\text{DIL-GP}(D_t)$ using the acquisition function: 
\begin{equation}
\hat{\beta} = \arg\min_{\beta \in \betaBRV} \alpha(\beta \mid \text{DIL-GP}(D_t)) \nonumber
\end{equation}
\State Evaluate $f(\hat{\beta})$; 
\State $D_{t+1} \gets D_t \cup {(\hat{\beta},f(\hat{\beta}))}$;
\EndFor
\State Output the optimized solution $\hat{\beta} = \arg\min_{\beta\in D_t} f(\beta)$;
\end{algorithmic}
\end{algorithm}

We prove an upper bound on the cumulative regret of DIL-Bayesian optimization (DIL-BO), where the procedure for proving is in the Appendix:
\begin{theorem}
\textbf{(Convergence of DIL-BO)} Given $\delta \in (0,1)$, denote $\gamma_t$ the maximum information gain after observing $t$ observations. If run the BO process with $\beta_t=B+\sigma\sqrt{2(\gamma_{t-1}+1+\log(4/\delta)}$, with probability$\geq 1-\delta/4$, the upperbound of the cumulative regret $R_T$ satisfies:
\begin{equation}
    R_T = \sum_{t=1}^{T} r_t \leq \beta_T \sqrt{C_1T\gamma_T}
\end{equation}
\label{theorem2}
\end{theorem}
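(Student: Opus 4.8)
The plan is to follow the now-standard analysis of GP bandit optimization (in the style of GP-UCB and its RKHS refinement IGP-UCB), adapting it to the minimization setting of Algorithm~\ref{alg:DILGP-BO} and to the DIL-GP posterior. The argument rests on three ingredients: (i) a high-probability confidence band that sandwiches the unknown objective $f$ between the DIL-GP posterior mean $\mu_{t-1}$ and the scaled posterior standard deviation $\beta_t\sigma_{t-1}$; (ii) a per-step regret bound showing that each query's instantaneous regret is controlled by the posterior uncertainty at the queried point; and (iii) a lemma converting the accumulated posterior variance into the maximum information gain $\gamma_T$. Throughout I assume, as is implicit in the statement, that the observation noise is $\sigma$-sub-Gaussian and that $f$ lies in the RKHS of the learned kernel with $\|f\|_k \le B$.

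First I would establish the confidence band. Writing the observations as $y_t = f(x_t) + \epsilon_t$ with a martingale-difference noise sequence, a self-normalized concentration inequality for vector-valued martingales yields that, simultaneously for all $t \ge 1$ and all $x$ in the search space $\betaBRV$,
\begin{equation}
|\mu_{t-1}(x) - f(x)| \le \Bigl(B + \sigma\sqrt{2(\gamma_{t-1}+1+\log(4/\delta))}\Bigr)\,\sigma_{t-1}(x) = \beta_t\,\sigma_{t-1}(x) \nonumber
\end{equation}
holds with probability at least $1-\delta/4$. This is precisely where the stated form of $\beta_t$ originates: the term $B$ absorbs the RKHS norm of $f$, the $\gamma_{t-1}$ term arises from bounding the log-determinant of the posterior kernel matrix by the information gain, and $\log(4/\delta)$ is the confidence-level cost that also fixes the $\delta/4$ probability budget.

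Conditioned on this event, the remaining steps are deterministic. Since Algorithm~\ref{alg:DILGP-BO} selects $x_t$ by minimizing a lower-confidence acquisition $\mu_{t-1}-\beta_t\sigma_{t-1}$, the usual argument gives the instantaneous regret bound $r_t = f(x_t)-f(x^\star) \le 2\beta_t\sigma_{t-1}(x_t)$, where $x^\star$ is a global minimizer: the confidence band simultaneously lower-bounds the acquisition value at $x^\star$ and upper-bounds $f(x_t)$. Next I would invoke the variance-to-information-gain lemma $\sum_{t=1}^T \sigma_{t-1}^2(x_t) \le C_1 \gamma_T$ with $C_1 = 2/\log(1+\sigma^{-2})$, which follows from the telescoping identity for the mutual information of the queried points. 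Combining these with the monotonicity of $\beta_t$ in $t$ and Cauchy--Schwarz,
\begin{equation}
R_T = \sum_{t=1}^T r_t \le 2\beta_T \sum_{t=1}^T \sigma_{t-1}(x_t) \le 2\beta_T\sqrt{T\sum_{t=1}^T \sigma_{t-1}^2(x_t)} \le \beta_T\sqrt{C_1 T\gamma_T} \nonumber
\end{equation}
where the remaining numerical factor is folded into $C_1$, giving the claimed bound.

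The main obstacle is justifying ingredient (i) for the DIL-GP posterior rather than for a fixed, correctly-specified GP. Because DIL-GP learns the kernel through the adversarial min-max partitioning of Algorithm~\ref{alg:DIL-GP}, the mean and variance $\mu_{t-1},\sigma_{t-1}$ are produced by a data-dependent kernel, so one must argue that the self-normalized martingale bound still applies. I would handle this by noting that at each BO round the kernel is fixed before the new point is queried, so conditionally on the learned $\theta$ the posterior is a bona fide GP predictive distribution and the concentration argument goes through with $f$ measured in the corresponding RKHS; the invariance penalty only affects which $\theta$ is selected, not the validity of the confidence geometry. Care is also needed to ensure the $\gamma_{t-1}$ entering $\beta_t$ is the information gain of the actually-used kernel, and to track the $\delta/4$ budget so that this event composes cleanly with the other high-probability events invoked in the paper.
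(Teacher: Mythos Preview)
Your proposal is correct and follows essentially the same route as the paper: the confidence band from the IGP-UCB self-normalized bound (the paper cites this as Lemma~\ref{lemma:f_bound}), the per-step regret bound $r_t\le 2\beta_t\sigma_{t-1}(x_t)$, the variance-to-information-gain conversion, and a final Cauchy--Schwarz step. The paper packages the constant as $C_1=8/\log(1+\sigma^{-2})$ by keeping the factor $2\beta_t$ inside the squared sum, whereas you pull $2\beta_T$ out first and use $2/\log(1+\sigma^{-2})$; these agree once the extra factor of $4$ is absorbed, exactly as you note. Your discussion of why the self-normalized bound remains valid for the data-dependent DIL-GP kernel is more careful than the paper's own treatment, which simply invokes the lemma without comment.
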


\section{Experiments}

In the experiments, we compare the proposed DIL-GP with GP with Gaussian kernel (GP), GP with the rational quadratic kernel (GP-RQ Kernel)\cite{mackay2003information}, GP with the dot product kernel (GP-DP Kernel)\cite{GPML}, random forest (RF)\cite{breiman2001random}, and multi-layer perceptron (MLP)\cite{rumelhart1986general}. For DIL-GP, we use the Gaussian kernel for all experiments to demonstrate the algorithms' generalization ability without the need for special kernel treatments. Detailed settings can be found in the Appendix.

\begin{table*}[!ht]
	\begin{center}
    {
        \small{
        \begin{tabular}{l|c|c|c|c|c}
            \toprule 
            & \multicolumn{4}{c|}{\textbf{King Housing Dataset}} & \multicolumn{1}{c}{\textbf{Automobile Dataset}} \\
            \midrule
            \textbf{Method} & domain1 & domain2 & domain3 & domain4& test dataset \\
            \midrule
            Random Forest& 0.450 $\pm$ 0.023 & 0.560 $\pm$ 0.031 & 0.623 $\pm$ 0.106 & 0.890 $\pm$ 0.037  & 0.992 $\pm$ 0.206\\
            MLP &  0.536 $\pm$ 0.137 & 0.650 $\pm$ 0.347 & 0.787 $\pm$ 0.762 & 0.913 $\pm$ 0.989 & 0.983 $\pm$ 0.072 \\
            GP &  0.371 $\pm$ 0.009 & 0.594 $\pm$ 0.056 & 0.800 $\pm$ 0.084 & 0.989 $\pm$ 0.100  & 0.889 $\pm$ 0.008 \\
            GP - RQ Kernel &  0.365 $\pm$ 0.016 & 0.442 $\pm$ 0.093 & 0.604 $\pm$ 0.173 & 0.901 $\pm$ 0.075 & 0.847 $\pm$ 0.351 \\
            GP - DP Kernel &  0.556 $\pm$ 0.007 & 0.663 $\pm$ 0.006 & 0.812 $\pm$ 0.010 & 1.056 $\pm$ 0.012  & 0.938 $\pm$ 0.013\\
            \midrule 
            \textbf{DIL-GP} & \textbf{0.291 $\pm$ 0.016}& \textbf{0.371 $\pm$ 0.011}  & \textbf{0.489 $\pm$ 0.042} & \textbf{0.543 $\pm$ 0.058} & \textbf{0.837 $\pm$ 0.069}  \\
            \bottomrule
        \end{tabular}
        }
    }
\caption{Quantitative comparison of RMSE between different methods on King Housing Dataset and  Automobile Dataset (Mean $\pm$ Max deviation over 5 runs)}
\label{table:compare_gp_real1}
\end{center}
\end{table*}{}

\begin{table*}[!ht]
	\begin{center}
    {
        \begin{tabular}{l|c|c|c|c}
            \toprule 
            Method & Hover  & Fig-8 & Sin-forward & Spiral-up\\
            \midrule
            Random Forest & 0.3740$\pm$0.0273 & 0.4444$\pm$0.0252  &{0.3962$\pm$0.0036 } & {0.3871$\pm$0.0310 } \\
            MLP  & 0.3735$\pm$0.0391 & 0.4433$\pm$0.0517 & 0.3582$\pm$0.0058 & 0.3733$\pm$0.0399  \\
            GP & 0.3739$\pm$0.0187  &  0.4433$\pm$0.0241   & 0.3579$\pm$0.0032 & 0.3727$\pm$0.0333  \\
            GP-RQ Kernel & 0.3733$\pm$0.0213 & 0.4492$\pm$0.0293 & 0.3572$\pm$0.0040  & 0.3724$\pm$0.0283  \\
            GP-DP Kernel & 0.3731$\pm$0.0238 & 0.4665$\pm$0.0371  & 0.3469$\pm$0.0052 & 0.3474$\pm$0.0222 \\
            \midrule 
            \textbf{DIL-GP} & \textbf{0.3610 $\pm$ 0.0200} & \textbf{0.4081 $\pm$ 0.0262}& \textbf{0.3367 $\pm$ 0.0043}& \textbf{0.3349 $\pm$ 0.0275} \\
            \bottomrule
        \end{tabular}
    }
    \caption{Errors between the actual trajectory and the desired trajectory  (Mean $\pm$ Max deviation over 5 runs)}
\label{table:bo}
\end{center}
\end{table*}

\subsection{Synthetic Dataset}

We first generate a one-dim synthetic dataset to demonstrate the effectiveness of our model. This dataset comprises two clusters of data with Gaussian distributions, which simulate two distinct domains. Specifically, the first cluster $X_1 \sim \mathcal{N}(0, 1)$, and the second cluster $X_2 \sim \mathcal{N}(6.5, 1)$. Our training set consists of one hundred samples from cluster $X_1$ and fifteen samples from cluster $X_2$ while the test set consists of eighty samples from cluster $X_2$. This setting generates a typical non-i.i.d distribution shifts scenario. The rooted mean square error (RMSE) is used as the comparison metric. For GP-related methods, we define another uncertainty metric---coverage rate, which represents the proportion of test set data that falls within the standard deviation region for GP models. When RMSE values are similar, higher coverage rates on the test set show better uncertainty discovery. 
Our results, presented in Figure~\ref{gp_syn_result}, reveal that GP baselines, MLP, and RF all overfit on the first cluster. Moreover, the GP baselines exhibit overconfidence on the data, without considering the heterogeneity, leading to overfitting and worse test set performances, as demonstrated in Table~\ref{table:syn2}. The GP-DP kernel fails to fit the data, indicating that changing kernels may improve GP's generalization performances on special applications but may not be generalizable to other domains. 

We further demonstrate the effectiveness of our model on a two-dimensional dataset comprising more complex functions. Specifically, we sample two distinct clusters from Gaussian distributions with varying means and variances, and generate labels using a trigonometric function (Details in Appendix). The first cluster $X_1 \sim \mathcal{N}(\begin{bmatrix}0.3 \\ 0.3\end{bmatrix}, \begin{bmatrix}0.01 & 0 \\ 0 & 0.01\end{bmatrix})$, and the second cluster $X_2 \sim \mathcal{N}(\begin{bmatrix}0.7 \\ 0.7\end{bmatrix}, \begin{bmatrix}0.01 & 0 \\ 0 & 0.01\end{bmatrix})$.The training set consists of one hundred samples from cluster one and fifteen samples from cluster two, while the test set solely comprises eighty samples from cluster two. We benchmark all methods to predict the two-dimensional data and present the experimental results in Table~\ref{table:syn2}. The experimental results on the 2D dataset show that  MLP suffers great performance degradation when facing heterogeneous data. RF, GP baseline, GP-RQ kernel and GP-DP kernel also pose significant gaps with DIL-GP due to the lack of domain generalization capability. 

In the two-dimensional generative dataset, DIL-GP shows more obvious advantages compared to the one-dimensional data. This may be due to the fact that higher dimensional data domains contain synergistic relationships with each other, while the domain invariant learning in DIL-GP learns more robust representations for OoD generalization. Subsequently, we conduct experiments on more complex real-world dataset to check its performance.

\subsection{Real-world Datasets}

We test the practicability of the algorithm on real-world datasets. The first dataset we consider is a regression dataset (Kaggle) of house sales prices from King County, USA. In this house prices predicting dataset, houses built in different periods are considered as different domains. Houses built in different periods possess distinct construction materials  and are affected by the cutural heritage values. This complicates the relationship between domain and selling price. Therefore, solving the out-of-distribution problem is a challenging task.

We sample a dataset of size 1304 from it, using the 17 variables including the house size, number of floors and the latitude and longitude locations to predict the transaction prices of the house. To simulate non-i.i.d. distribution shifts, we split the dataset according to the built year of the houses.
We divide the dataset into five domains, using data with build year from 1980 to 2015 as the training dataset (D0) and build year from 1900 to 1979 as test dataset. The test dataset is further divided into four domains with a span of 20 years (D1-D4). The motivation for selecting earlier-built houses as the test set is that the value of houses built in earlier periods is harder to predict, making the problem more challenging \cite{klein1975our}. Our results, presented in Figure~\ref{gp_real1_result}, demonstrate that DIL-GP achieves good results especially in domains with a long time interval from the training set. DIL-GP is also more stable across multiple domains, demonstrating its effectiveness in OOD generalization. MLP outperforms DIL-GP in some experiments but suffers significant instability. We provide quantitative results in Table~\ref{table:compare_gp_real1}.

The second real-world dataset under consideration is the Automobile dataset from UCI datasets with more than one hundred samples with fourteen dimensions' features (including fuel-type, aspiration, body-style, compression-ratio, etc) for insurance risk rating of automotive types. The term "insurance rating" refers to the extent to which an automotive type is more risky than its market price indicates in actuarial science. Different types of automobiles have implications for various usage scenarios and demand-supply relation ships and so on. It is reasonable to consider the types of automobiles as different domains. However, it is hard to make a clear statement which specific types of cars should be grouped in the same domain to train a good predictor, making it a challenging problem for previous methods reliant on domain labels. For evaluation, we use the "sedan" and "hardtop" types as the training domains, while "wagon", "hatchback", and "convertible" types are for test domains, which we do not provide for training algorithms. As shown in Table~\ref{table:syn2}, although the domain labels are not provided in the training as previous OOD generalization methods, DIL-GP attains the lowest RMSE on the novel domains. This further verifies DIL-GP's min-max formulations' adaptability for practical problems and its OOD generalization abilities under diverse kinds of distribution shifts.

\begin{figure}[!ht]
\centering
 \includegraphics[width=0.5\textwidth]{./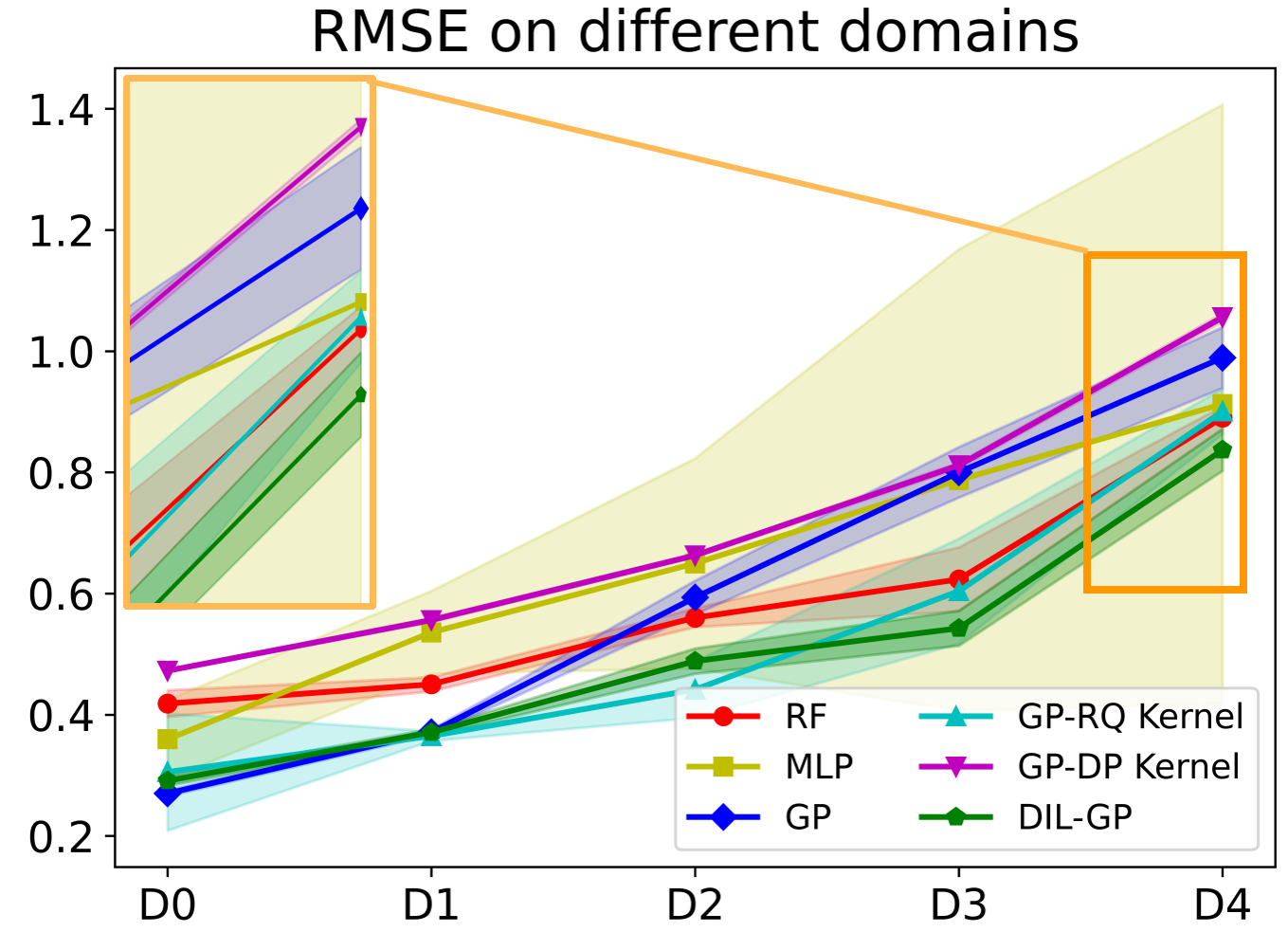}
    \caption{RMSE of methods on different domains (D0-D4) (Mean $\pm$ $\frac{1}{2}$ Max deviation over 5 runs).}
  \label{gp_real1_result}
\end{figure}

\begin{figure}[!ht]
  \includegraphics[width=0.5\textwidth]{./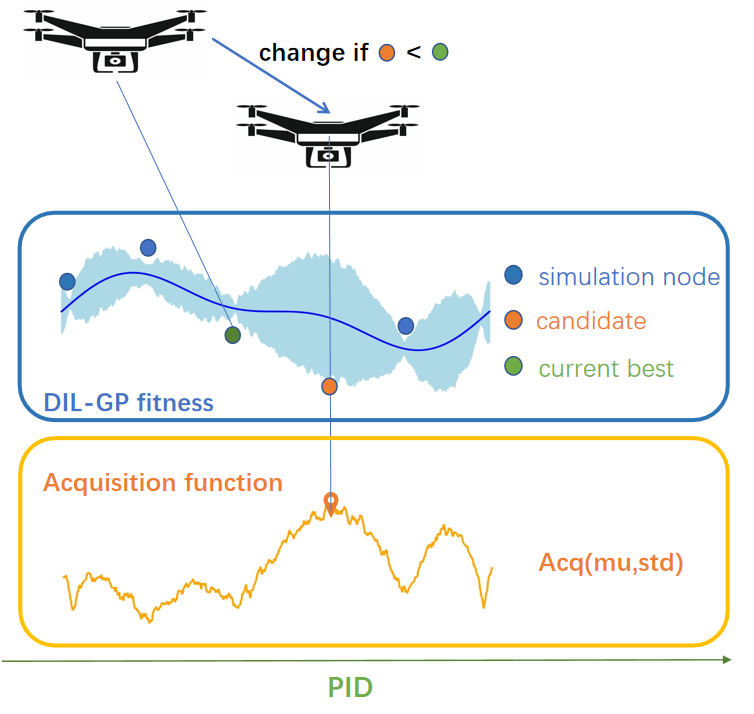}
  \caption{Bayesian optimization for PID tuning. A GP function is fitted to the existing data points, and a scoring function is used to compute scores for various parameter settings. The highest-scoring parameter setting is selected as the candidate PID and added to the data points. If this candidate outperforms the current best, it becomes the new best. This process is iteratively repeated until the termination condition is met.}
  \label{bo_pic}
\end{figure}

\begin{figure*}[!ht]
  \centering
  \includegraphics[width=0.88\textwidth]{./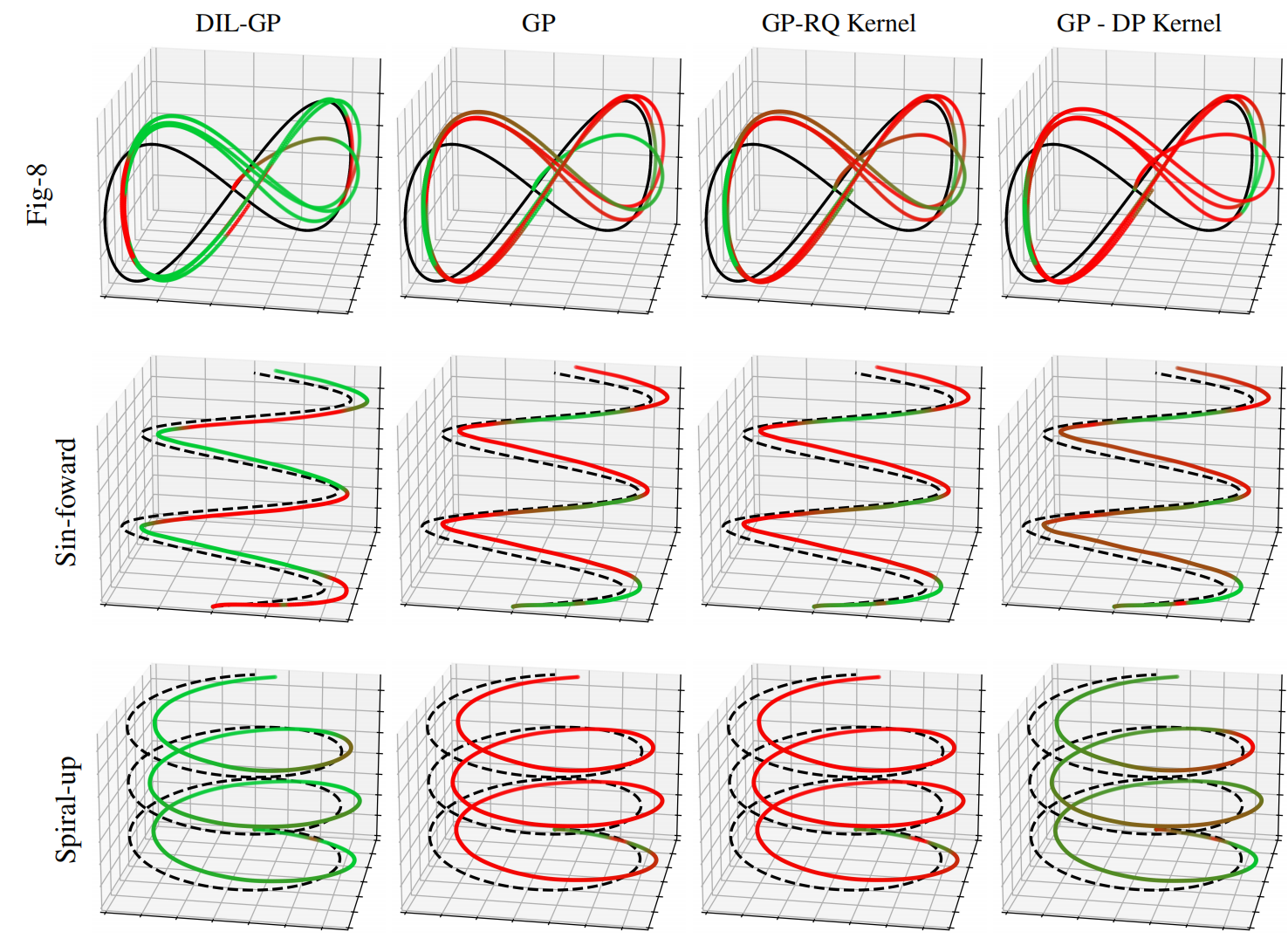}
  \caption{Visualizations of the quadrotor flight simulations on various trajectories. The black line is the desired trajectory. The green line denotes lower deviation from the desired trajectory while the red line denotes farther distances. BO using DIL-GP finds the best PID parameters, which enable accurate controls across all trajectories types under challenging turbulent wind conditions.}
  \label{uav_result}
\end{figure*}

\subsection{Bayesian Optimization for Quadrotors PID Tuning}
The Proportional-Integral-Derivative (PID) control is a widely used feedback control algorithm that has been applied in various applications, such as quadrotor control \cite{johnson2005pid}. This control algorithm employs the weighted feedback of the errors, consisting of three components: proportional, integral, and derivative of errors.  PID control takes in sensor data and determines the motor speeds of rotors for quadrotor flight control. Tuning the PID parameters is tedious but challenging due to the complex dynamics, sensitivity to changes and external influences. It usually requires human experts to achieve optimum performance: the weights for the three components need to be carefully tuned to achieve stable control during quadrotor flight under turbulent environments \cite{NEURIPS2021_52fc2aee}. This makes Bayesian optimization suitable for this problem, as it tunes the parameters in a black-box manner. At the same time, the fact that the sensor data is quickly changing during flight, making it hard to distinguish any environment information. A well-designed approach needs to enable the optimization to obtain a PID that still achieves good performance on environments that are less seen in the training environment. In our experiment, we simulate the flight process of a quadrotor and use the PID control to fly the quadrotor along the set waypoints. We use Bayesian optimization to optimize the weights for PID control to minimize the averaged control errors (ACEs) defined as the mean squared error between the true coordinates and the desired waypoints. During the quadrotor simulation, the wind force is simulated with the widely-used Dryden model \cite{specification1980flying} to determine the wind speed in three directions at each time slot. We simulate two different wind domains. The first wind domain is for winds that have a mean value of zero and change more rapidly, and the second wind domain is for winds that have a non-zero mean value and are more stable. This is to test whether Bayesian optimization can find suitable PID weights to enable quadrotor to remain stable even under unseen environment.  Representing the wind force as a binary in the horizontal and vertical directions, the wind was experimentally distributed uniformly with a mean of (0,0) and a variance of (5,2.5) in the first domain, and with a mean of (3,1) and a variance of (2,1) in the second domain.

For comparison, we implement Bayesian optimization's surrogate model with DIL-GP, other GP variants, RF, and MLP. For GP-related methods,we use the upper confidence bound method as the acquisition function. The overall flow of the experiment is shown in Figure~\ref{bo_pic}. Bayesian optimization alternatively finds the next PID weights for trial and updates the surrogate model to predict the next PID weights for trial. Three flight trajectories obtained by PID optimized with different methods are shown in Figure~\ref{uav_result}. Other trajectories results can also be found in Appendix. The results shown are the mean results of five experiments, each run with a different random seed. The PID parameters obtained through DIL-GP Bayesian optimization demonstrates superior control performance on the quadrotor in unseen domain, with smaller control errors in the flight trajectory. The numerical comparison of all six methods is presented in Table~\ref{table:bo}.
As shown in Table~\ref{table:bo}, the variations in wind environments across the different experiments had a significant impact on the error values, resulting in larger deviations.
Due to limited OOD generalization abilities, baseline surrogate models struggles in predicting better solutions, while DIL-GP Bayesian optimization finds robust PID weights, with which the quadrotor can follow the desired paths more accurately. The error-iteration curves of ACE during Bayesian optimization are in Appendix for reference.

\section{Conclusion}
In this paper, we propose a domain invariant learning approach for Gaussian processes (DIL-GP) and its Bayesian optimization extension to improve their generalization abilities. Numerical experiments under challenging synthetic and real-world datasets demonstrate the effectiveness of the proposed min-max formulation of domain invariant learning for Gaussian processes. With the proposed framework, DIL-GP automatically discovers the heterogeneity in the data and achieves OOD generalization on various benchmarks. We further demonstrate that the Bayesian optimization algorithm with DIL-GP's superiority in a PID tuning problem.

\section*{Acknowledgments}
Nanyang Ye was supported in part by National Natural Science Foundation of China under Grant No.62106139, 61960206002, 62272301, 62020106005, 62061146002, 62032020, in part by Shanghai Artificial Intelligence Laboratory and the National Key R\&D Program of China (Grant NO.2022ZD0160100).

\bibliography{aaai24}

\newpage
\onecolumn
\appendix

\section{Experiment Details}
\label{supp:expdetails}
We apply GP, DIL-GP, GP with Rational Quadratic Kernel, GP with Dot Product Kernel, Random Forest, and MLP to the dataset. For a fair comparison, DIL-GP and GP both use the Gaussian kernel with the {Equation} \ref{GK}. In consideration of the relatively small sample size and to constrain the size of model parameters, a single hidden layer MLP is utilized. Rational Quadratic Kernel and Dot Product Kernel are represented by {Equation} \ref{RQK} and {Equation} \ref{DPK} respectively. The methods will be employed in the subsequent experiments detailed in the subsequent sections.
\begin{equation}
    K_G(X, Y)_{i, j} = s \exp{\frac{\|x_i - y_j\|}{l^2}},\label{GK}
\end{equation} 

\begin{equation}
    K_{RQ}(X, Y)_{i, j} = s \left(1 + \frac{\|x_i - y_j\|^2}{2\alpha l^2}\right)^{-\alpha},\label{RQK}
\end{equation}
    
\begin{equation}
   K_{DP}(X, Y)_{i, j} = s(x_i^T y_j + \sigma^2),  \label{DPK}
\end{equation}
where $x_i$ and $y_j$ is the $i$-th and $j$-th data item in the input.$s$,$l$,$\alpha$ and $\sigma$ are hyperparameters of the kernels.

For experiments on synthetic datasets and real-world datasets, We evaluate the performance using RMSE (Root Mean Square Error) as the metric. For GP-related methods, we defined an auxiliary metric Coverage Rate to indicate the proportion of test samples whose labels fall within the predicted range of $\mu \pm std$ by the model. This can be used in conjunction with RMSE to evaluate the validity of the model. We use the sklearn library for the implementation of MLP and RF. The specific structures of MLP and RF in the experiment are shown in Table~\ref{mlp_rf}.
\begin{table*}[!ht]
	\begin{center}
    {
        \begin{tabular}{l|c|c|c|c|c}
            \toprule 
             & 1-D Synthetic  & 2-D Synthetic &  Housing &  Mobile & BO\\
            \midrule
            RF(estimators)& 50&50  &50&50&50\\
            MLP(hidden layers)  &[64,64,64]&[64,64,64]   & [64,64,64] & [64,64,64] &[100]  \\

            \bottomrule
        \end{tabular}
    }
    \caption{Errors between the actual trajectory and the desired}
\label{mlp_rf}
\end{center}
\end{table*}

In different experiments, the settings of the trade-off parameter lambda are searched in [0.01, 0.025, 0.05, 0.1, 0.25, 0.5, 1, 1.5, 2, 3] and the best-performing value is selected respectively. In the training process, we use SGD optimizer because of its fast speed, and we do not observe performance gain by switching to other optimizers such as Adam or LBFGS optimizers in our experiment. The settings of learning rate are searched in [0.0001, 0.0005, 0.001, 0.005, 0.01, 0.03, 0.05, 0.1, 0.3, 0.5] and the best-performing value is selected respectively.

\subsection{Synthetic Dataset}
The specific implementation of the generation of the one-dimensional synthetic dataset is as follows. The dataset comprises two clusters of data with Gaussian distributions, which simulate two distinct domains. Specifically, the first cluster $X_1 \sim \mathcal{N}(0, 1)$, and the second cluster $X_2 \sim \mathcal{N}(6.5, 1)$. The labels of the two data clusters are generated by  $y_1 = 3\sin({x_1} / {2\pi}) + 3\epsilon$ and $y_2 = -\sin({(x_2 - 6.5)} / {32\pi}) + 0.5 + \epsilon$ respectively. The noise $\epsilon \sim  \mathcal{N}(0, 0.1)$. Our training set consists of one hundred samples from cluster $X_1$ and fifteen samples from cluster $X_2$ while the test set consists of eighty samples from cluster $X_2$. 

The specific implementation of the generation of the two-dimensional synthetic dataset is as follows. We sample two distinct clusters from Gaussian distributions. Specifically, the first cluster $X_1 \sim \mathcal{N}(\begin{bmatrix}0.3 \\ 0.3\end{bmatrix}, \begin{bmatrix}0.01 & 0 \\ 0 & 0.01\end{bmatrix})$, and the second cluster $X_2 \sim \mathcal{N}(\begin{bmatrix}0.7 \\ 0.7\end{bmatrix}, \begin{bmatrix}0.01 & 0 \\ 0 & 0.01\end{bmatrix})$.  The labels of the two data clusters are generated by the formulas $y_1 = 1.5\sin(30x_{11}+20)+1.5\sin(30x_{12}+20)+\epsilon_1$ and $y_2 = 0.5\sin(50x_{21}+20) + 0.5\sin(50x_{22}+20) +1.1 + \epsilon_2$ respectively. The noise $\epsilon_1 \sim  \mathcal{N}(\begin{bmatrix}0 \\ 0\end{bmatrix},\begin{bmatrix}0.1 & 0 \\ 0 & 0.1\end{bmatrix})$ and $\epsilon_2 \sim  \mathcal{N}(\begin{bmatrix}0 \\ 0\end{bmatrix},\begin{bmatrix}0.05 & 0 \\ 0 & 0.05\end{bmatrix})$. The training set consists of one hundred samples from cluster $X_1$ and fifteen samples from cluster  $X_2$, while the test set solely comprises eighty samples from cluster  $X_2$.

The setting of synthetic dataset generates a domain-shift environment, with the training set comprised mostly of data from cluster 1, and the test data selected from cluster 2, which generates a typical non-i.i.d distribution shifts scenario. There is a minor overlap between these two clusters, which makes the regression problem more challenging. The rooted mean square error (RMSE) is used as the comparison metric.

\subsection{Real-world Datasets}
In the two real world datasets, we scale the inputs and outputs to zero mean and unit standard deviation.  For simplicity, we set the number of latent environments to be 2. The optimal parameters for covariance kernels are selected using maximum likelihood estimation criterion with gradient descent.  We use cross validation to train the method, and the results are then tested on the test set. 

\begin{figure}[ht!]
\begin{subfigure}[b]{0.49\textwidth}
    \includegraphics[width=\linewidth]{./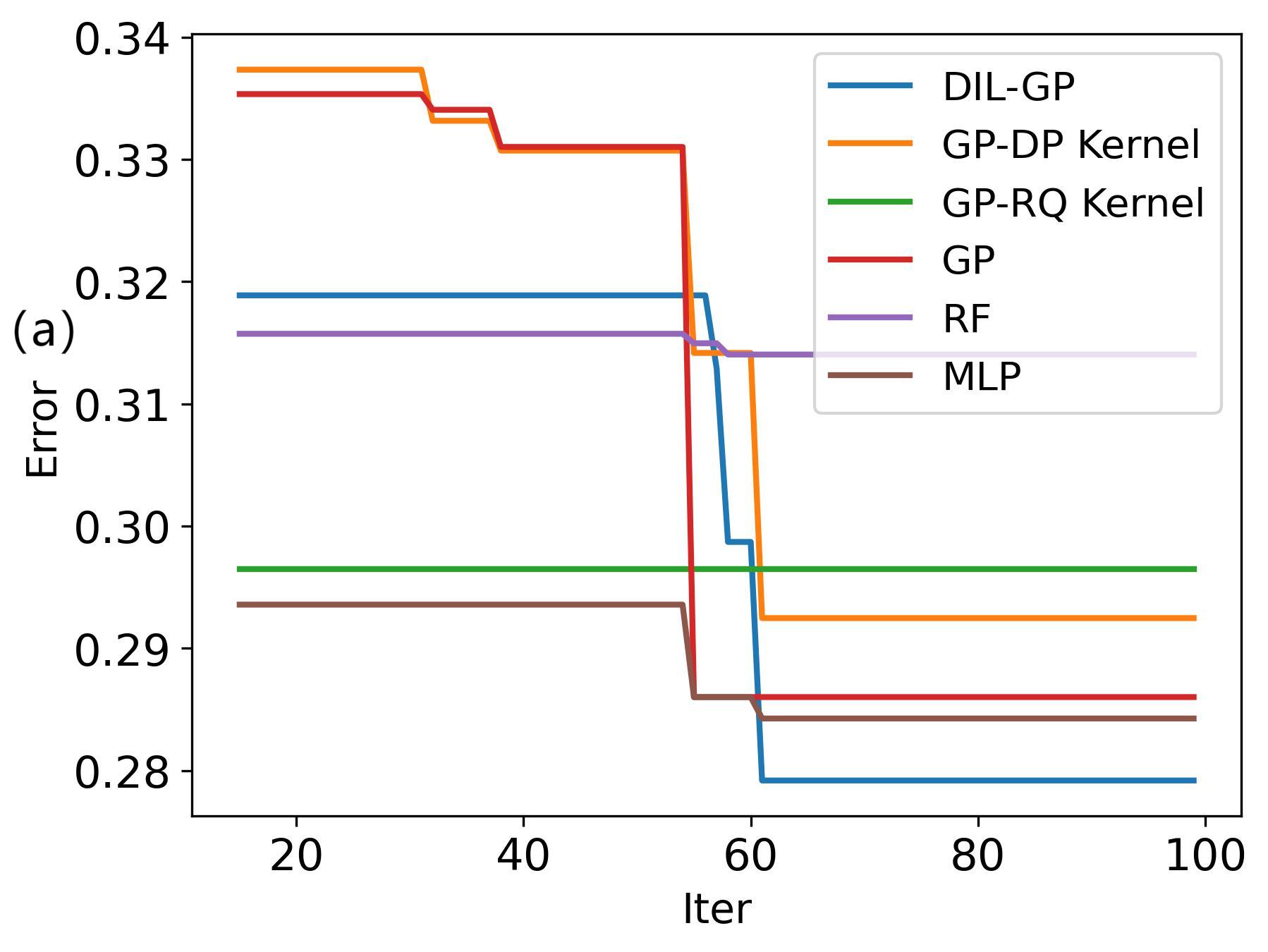}
\end{subfigure}
\begin{subfigure}[b]{0.49\textwidth}
    \includegraphics[width=\linewidth]{./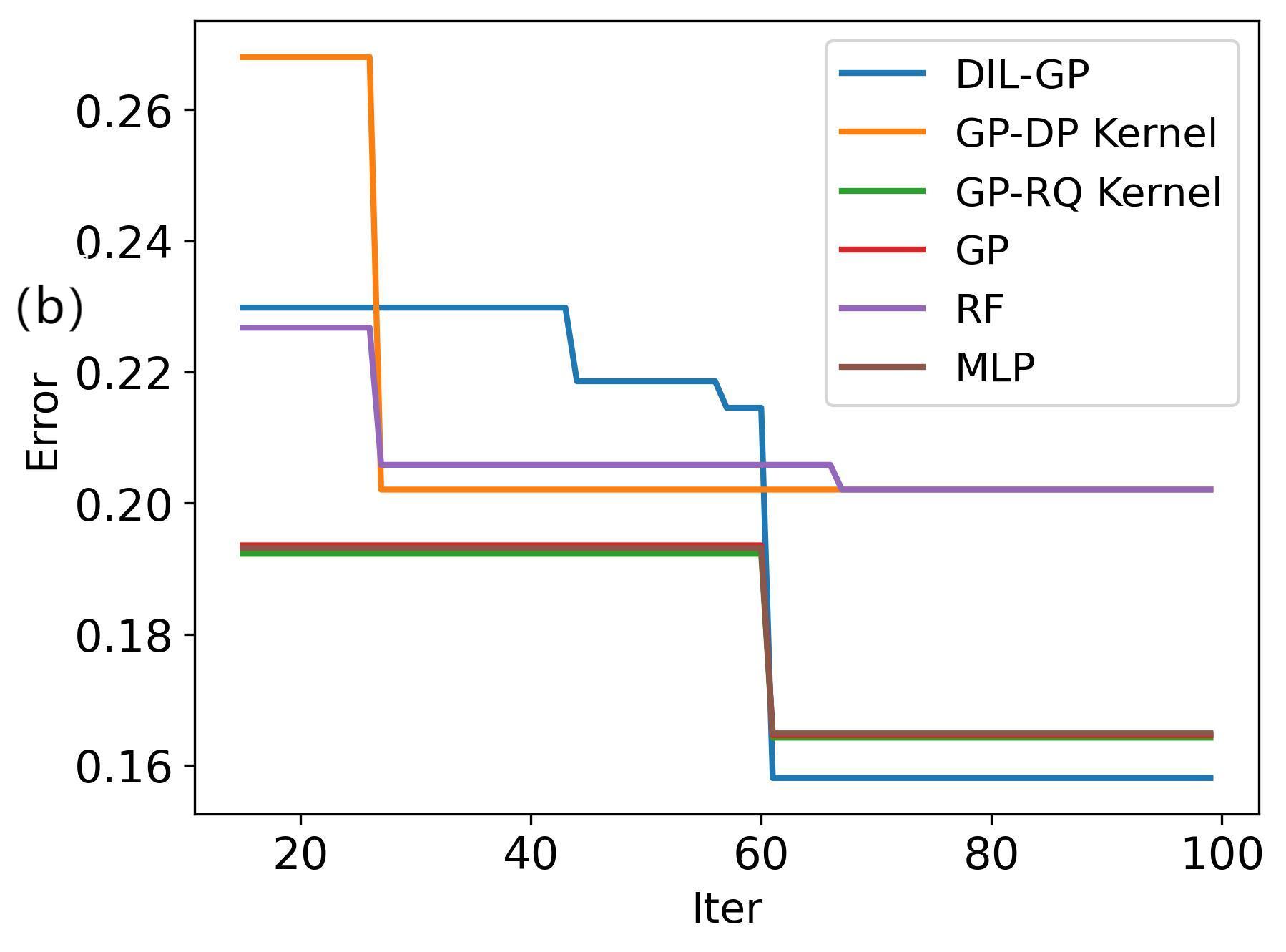}
\end{subfigure}
\caption{The evolution curves of PID control error during Bayesian optimization, which are obtained on the Hover and Fig-8 trajectories experiments. Methods all reach convergence in 100 epochs.}
\label{bo_curve}
\end{figure}
\subsection{Bayesian Optimization for Quadrotors PID
Tuning}
In the Bayesian Optimization section, we conducted simulation experiments of quadrotors. A quadrotor is a type of multirotor aircraft consisting of four rotors and a control system. Each rotor is driven by an electric motor and can generate lift and control the attitude of the aircraft by varying its speed and rotation direction. Quadrotors typically use inertial measurement units (IMUs), barometers, GPS, and other sensors to measure the attitude, altitude, and position of the aircraft, and use control algorithms to adjust the rotor speeds and rotation directions to achieve flight and control. The Proportional-Integral-Derivative (PID) control is a widely used feedback control algorithm that has been applied in quadrotors. This control algorithm employs the weighted feedback of the errors, consisting of three components: proportional, integral, and derivative of errors.  The weights for the three components need to be carefully tuned to achieve stable control during quadrotor flight under turbulent environments. In our experiment, we simulate the flight process of a quadrotor and use the PID control to fly the quadrotor along the waypoints. We use Bayesian optimization to optimize the weights for PID control to minimize the averaged control errors (ACEs) defined as the mean squared error between the true coordinates and the desired waypoints. In the experiments, we simulate two different wind domains. The first wind domain mainly consists of mild and steady wind magnitude, while the second mainly simulate turbelent outdoor environments. This is to test whether Bayesian optimization can find suitable PID weights to enable quadrotor to remain stable even under unseen environment.

The evolution curves of control error during Bayesian optimization are shown in Figure~\ref{bo_curve}, where we can observe faster convergence with DIL-GP as the surrogate model. For example, in Figure~\ref{bo_curve} (a), Bayesian optimization with DIL-GP achieves much lower error after only 60 steps. On the contrary, though MLP achieves lower errors at the initial 50 steps, it is stuck in the local minima and fail to predict better trial points. Due to space issues, in the main text we show experimental results for three trajectories, and in Figure ~\ref{uav_result2} we provide experimental results for all four trajectories.

\begin{figure*}[ht!]
  \centering
  \captionsetup[subfigure]{justification=centering} 

\rotatebox{90}{
\begin{minipage}[c]{.2\textwidth} 
    \centering 
    \caption*{Hover} 
  \end{minipage}%
}
      \begin{subfigure}[b]{0.2\textwidth}
      \caption*{DIL-GP}
    \includegraphics[width=\linewidth]{./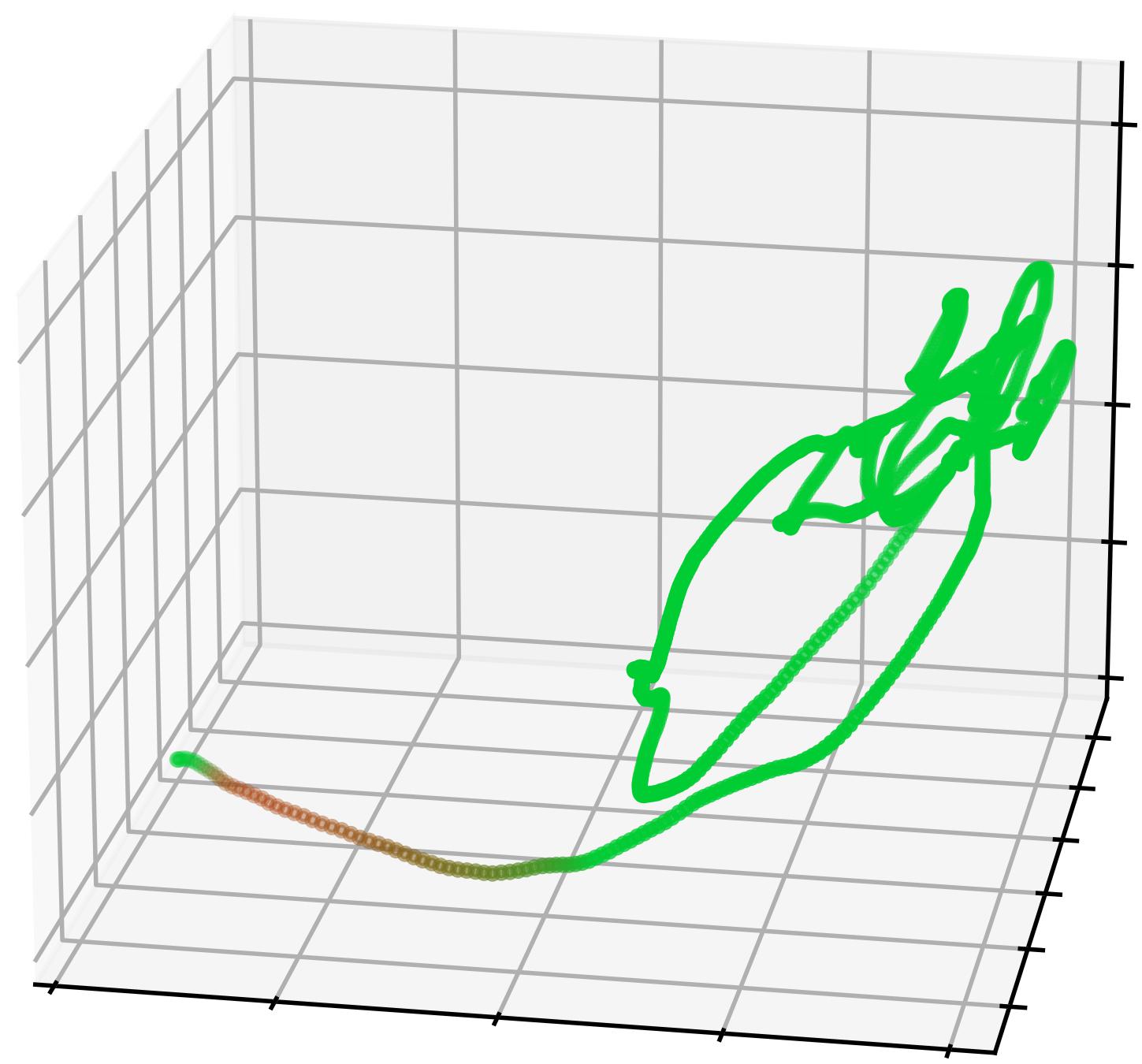}
  \label{fig:1b}
  \end{subfigure}
  \begin{subfigure}[b]{0.2\textwidth}
  \caption*{GP}
    \includegraphics[width=\linewidth]{./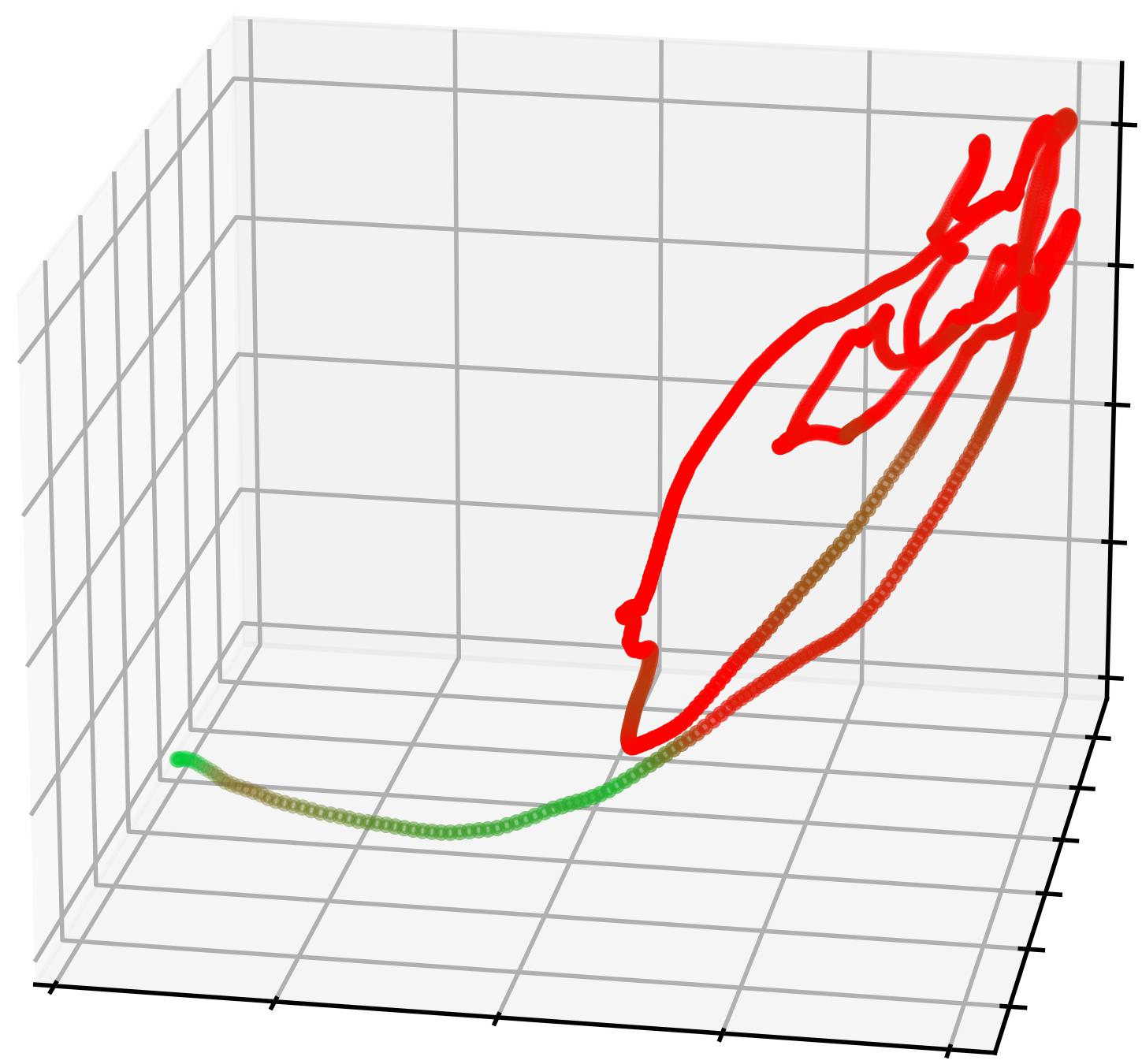}
  \end{subfigure}
    \begin{subfigure}[b]{0.2\textwidth}
    \caption*{GP-RQ Kernel}
    \includegraphics[width=\linewidth]{./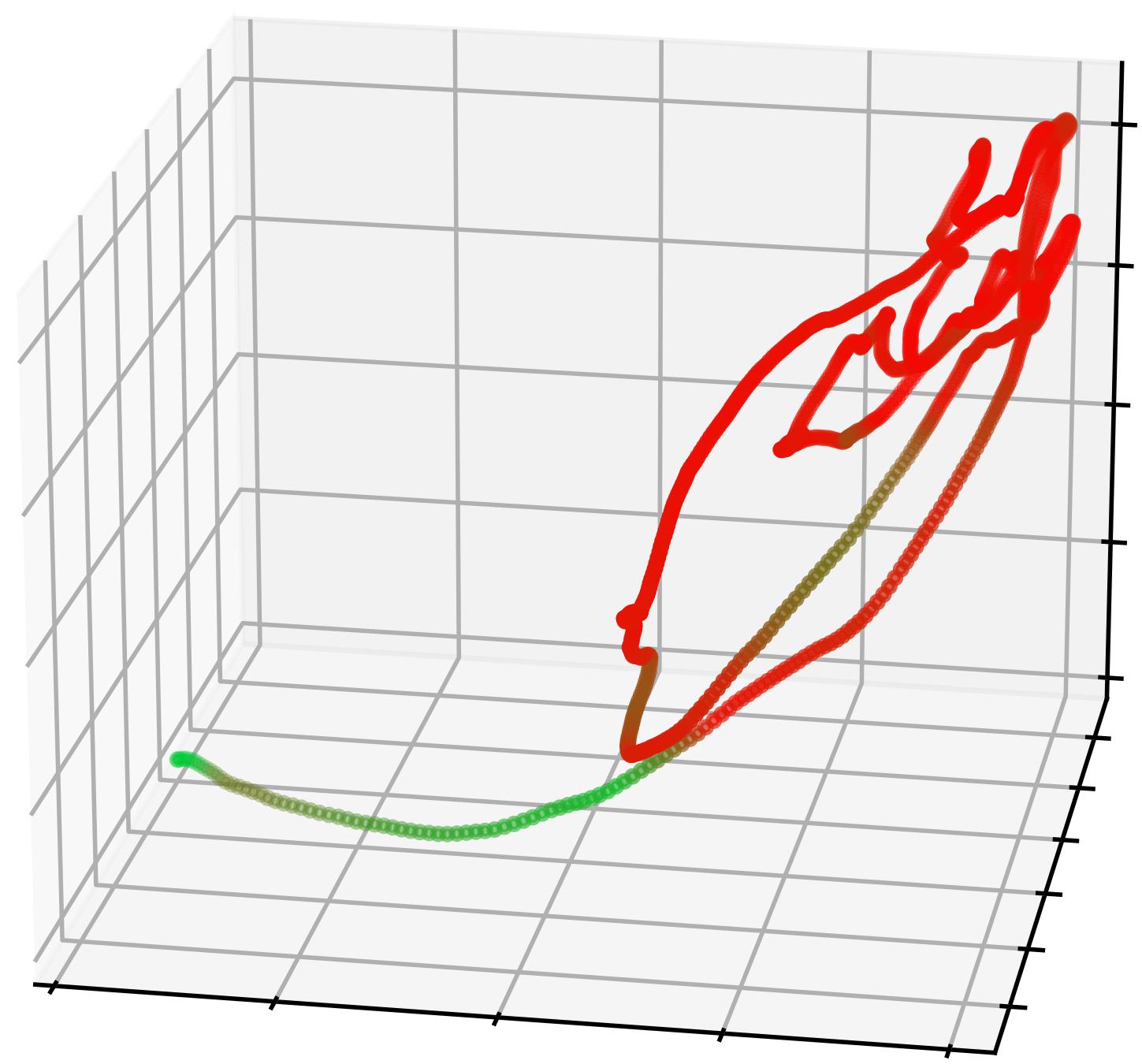}
  \end{subfigure}
    \begin{subfigure}[b]{0.2\textwidth}
    \caption*{GP - DP Kernel}
    \includegraphics[width=\linewidth]{./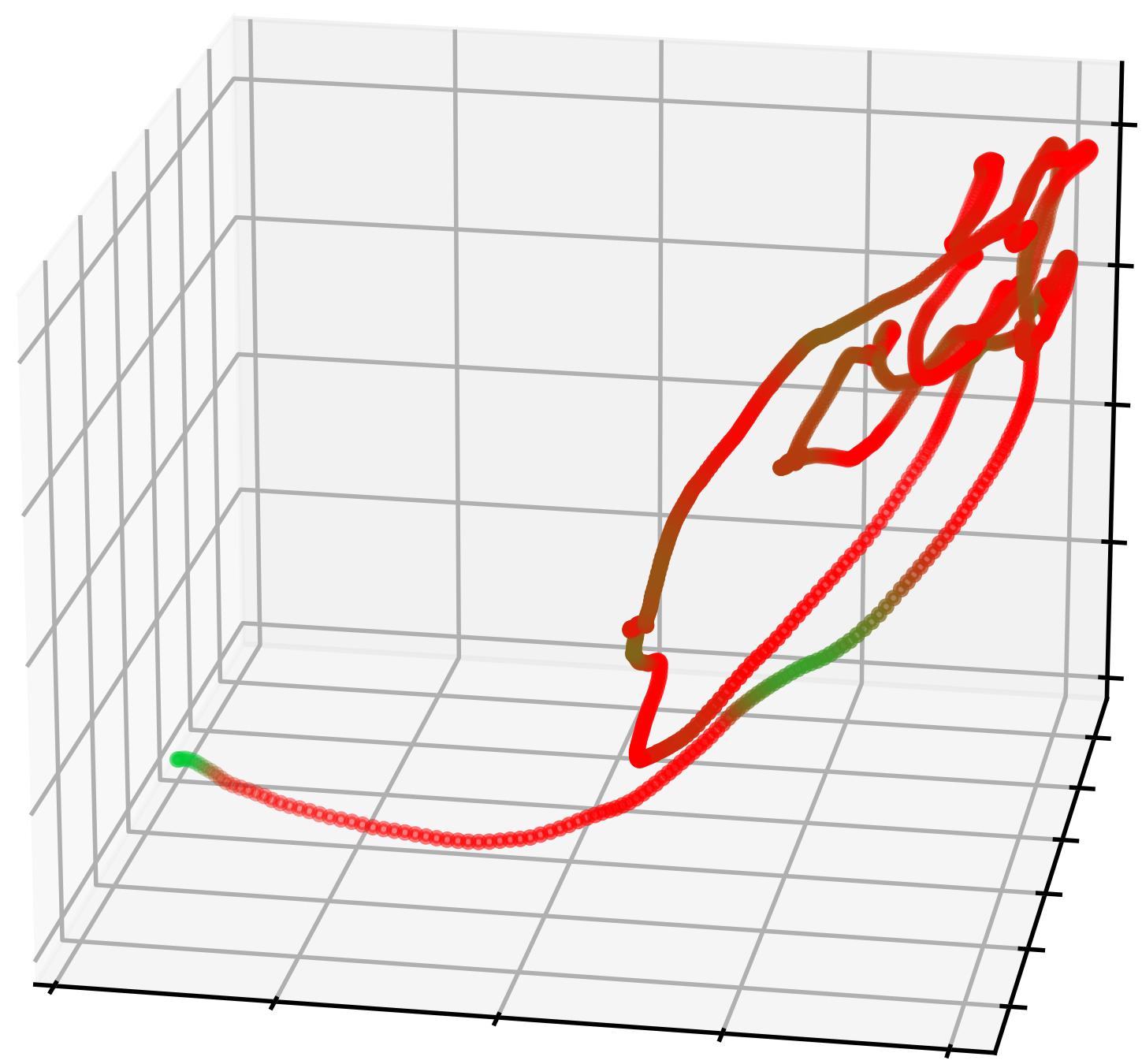}
  \end{subfigure}

  \rotatebox{90}{
\begin{minipage}[c]{.2\textwidth} 
    \centering 
    \caption*{Fig-8} 
  \end{minipage}%
}
    \begin{subfigure}[b]{0.2\textwidth}
    \includegraphics[width=\linewidth]{./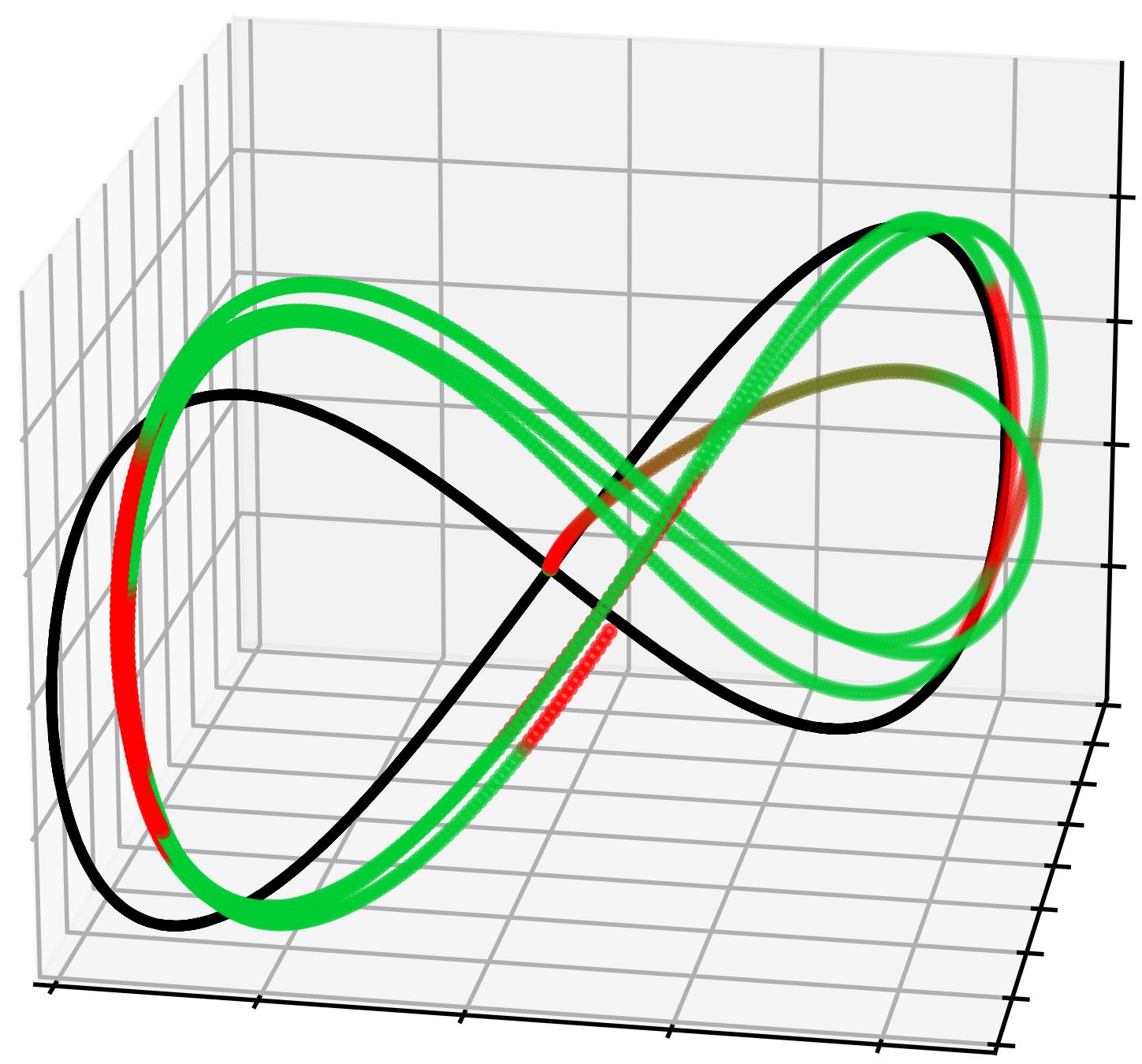}
  \end{subfigure}
    \begin{subfigure}[b]{0.2\textwidth}
    \includegraphics[width=\linewidth]{./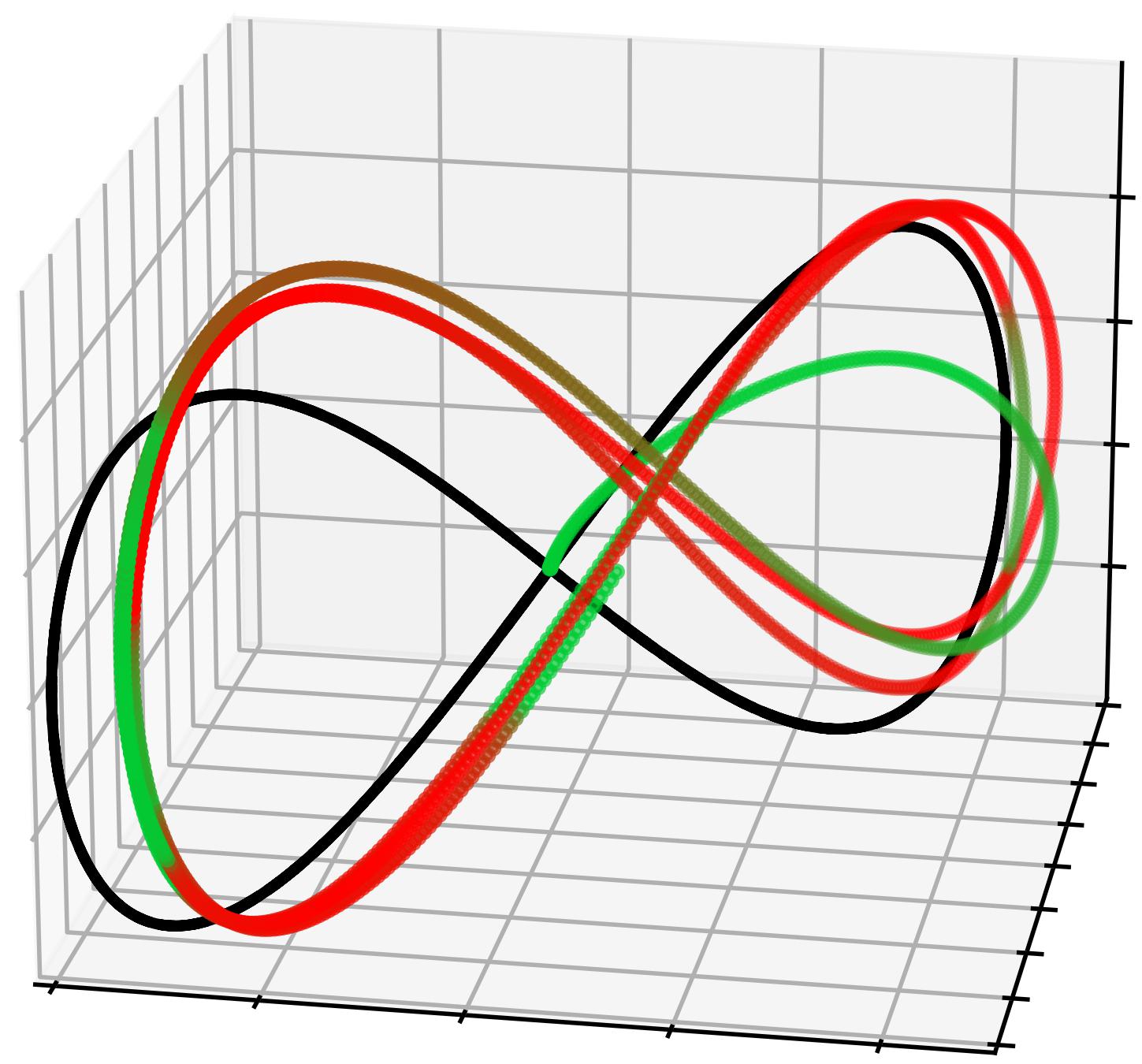}
  \end{subfigure}
    \begin{subfigure}[b]{0.2\textwidth}
    \includegraphics[width=\linewidth]{./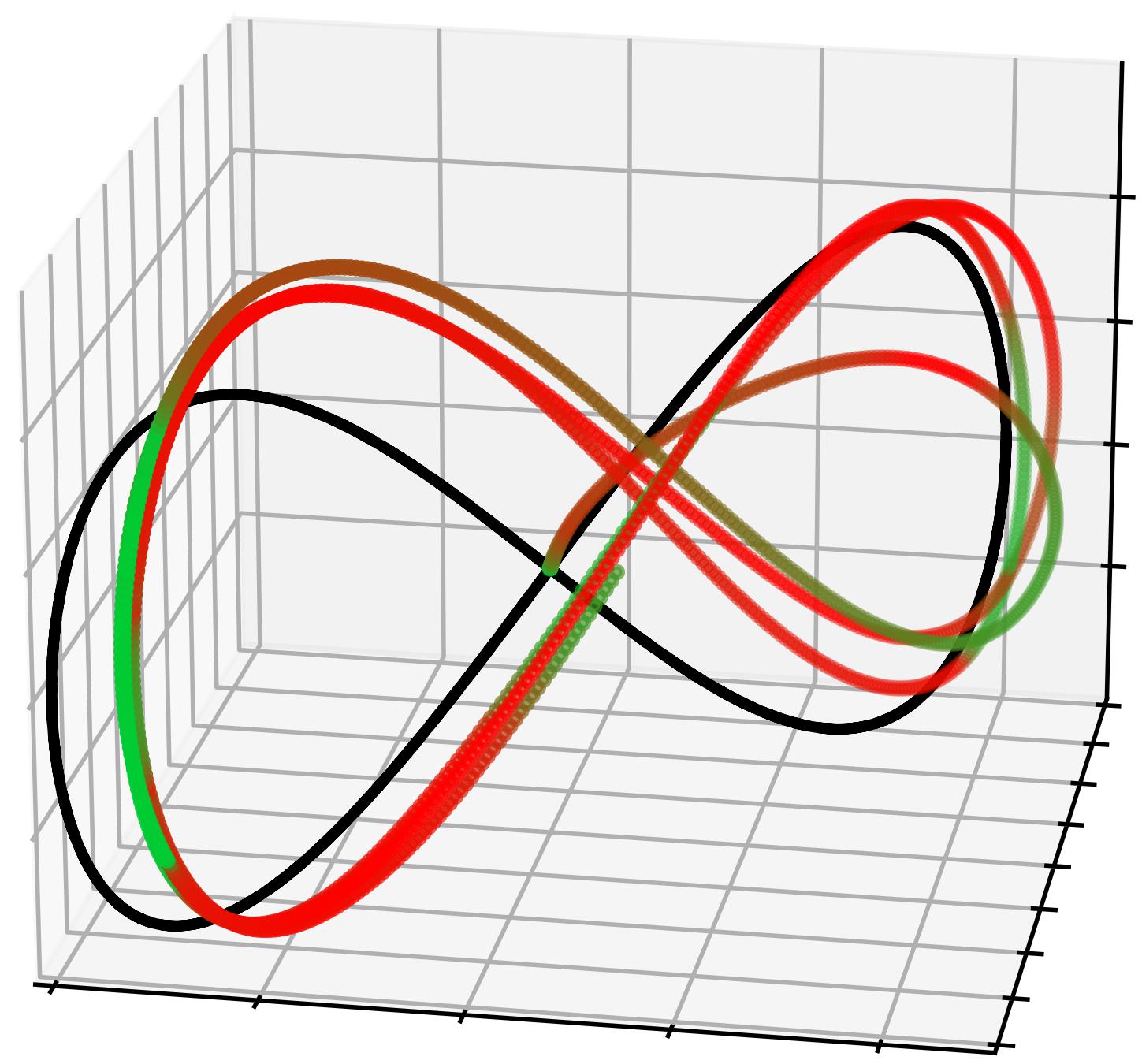}
  \end{subfigure}
    \begin{subfigure}[b]{0.2\textwidth}
    \includegraphics[width=\linewidth]{./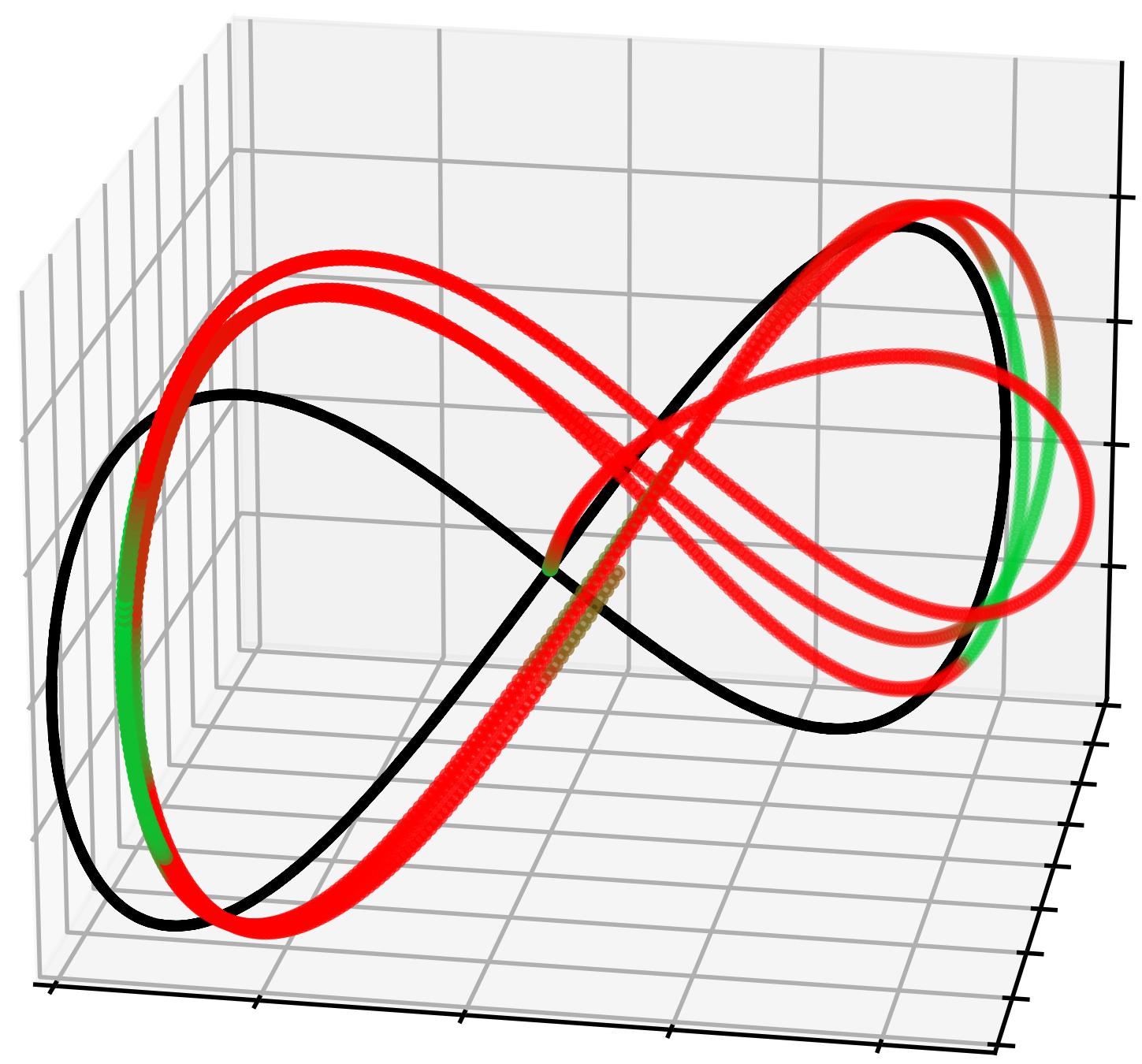}
  \end{subfigure}

  \rotatebox{90}{
\begin{minipage}[c]{.2\textwidth} 
    \centering 
    \caption*{Sin-foward} 
    \label{fig:side:caption} 
  \end{minipage}%
}
    \begin{subfigure}[b]{0.2\textwidth}
  
    \includegraphics[width=\linewidth]{./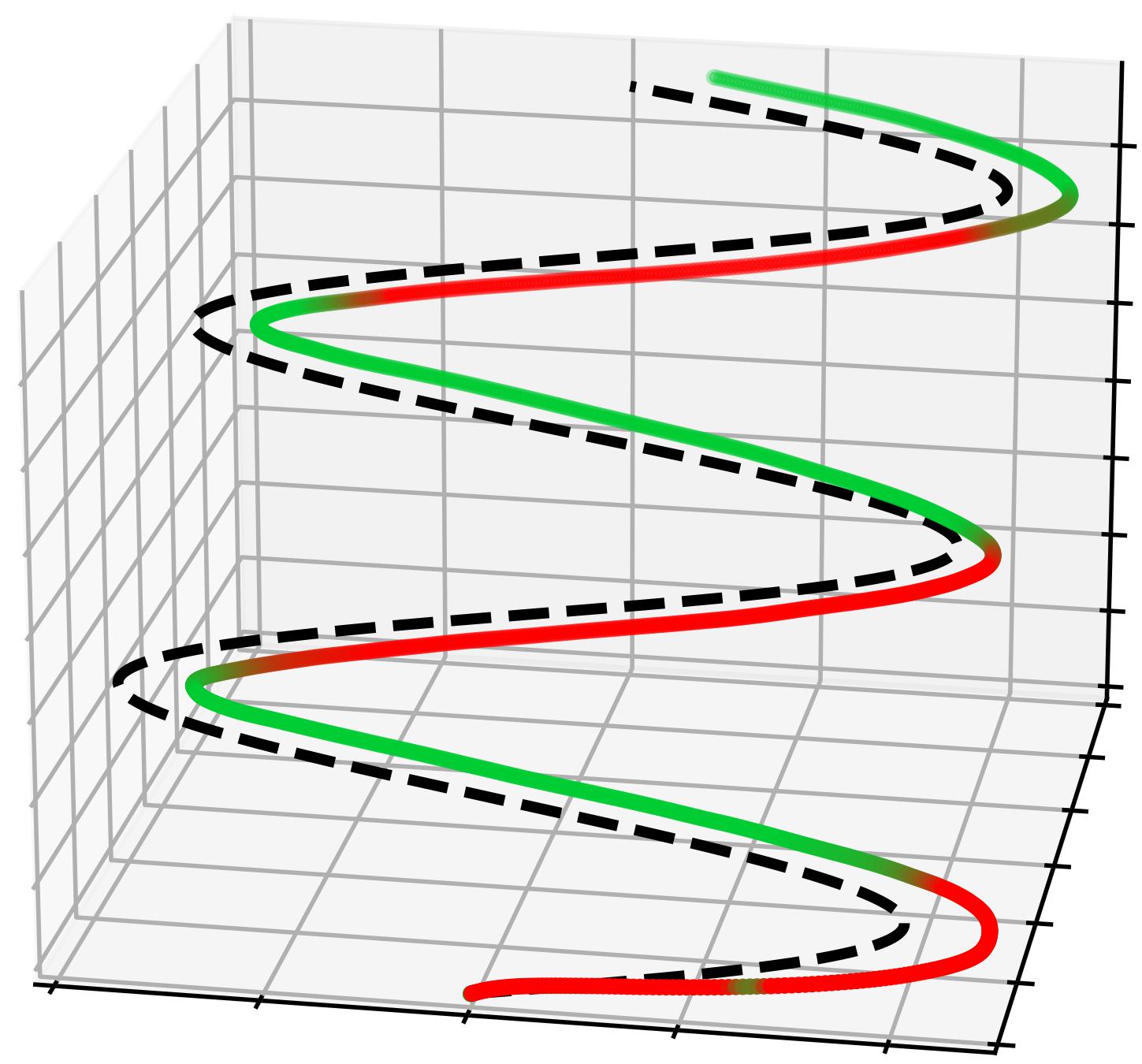}
  \end{subfigure}
    \begin{subfigure}[b]{0.2\textwidth}
    \includegraphics[width=\linewidth]{./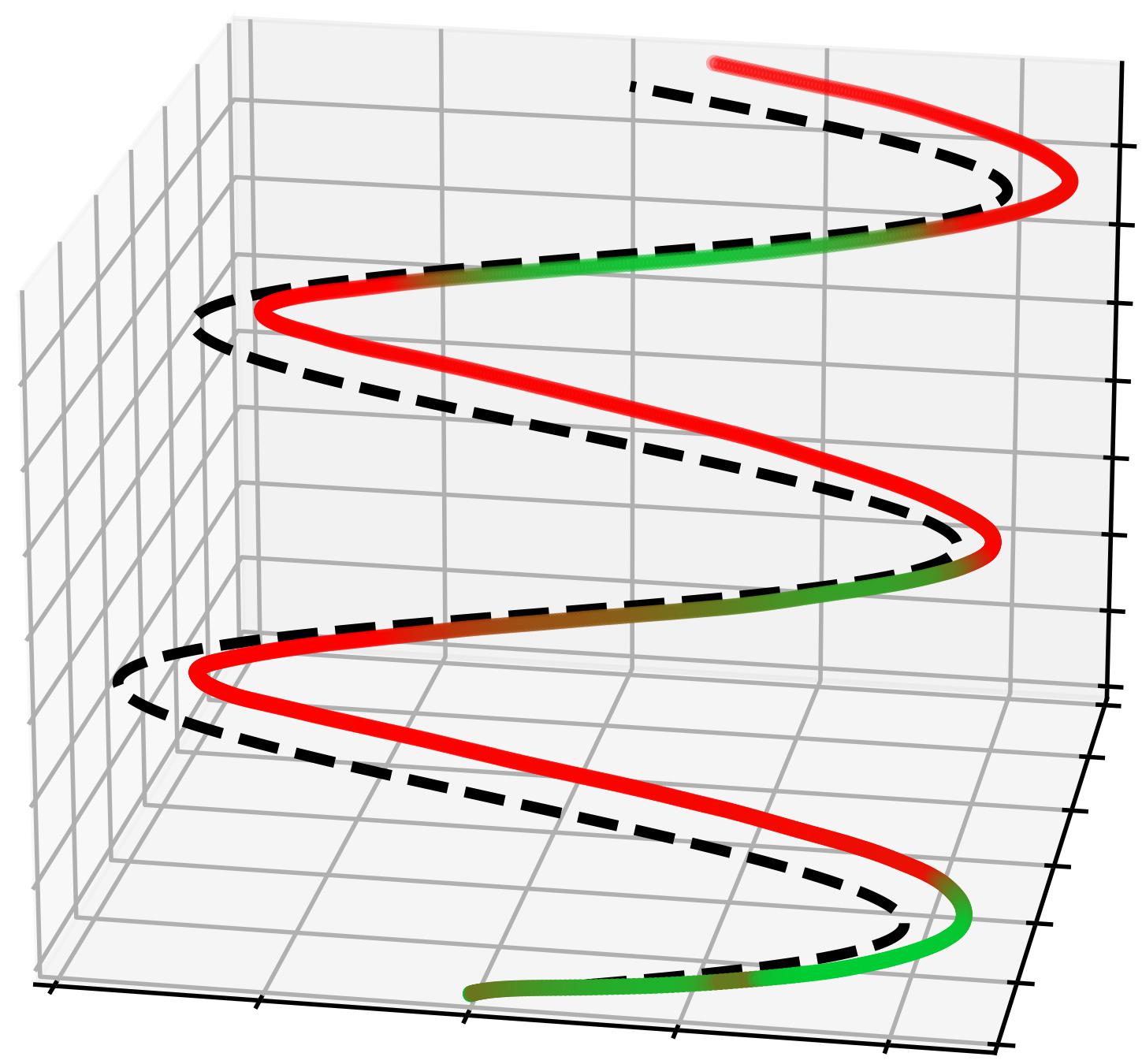}
  \end{subfigure}
      \begin{subfigure}[b]{0.2\textwidth}
    \includegraphics[width=\linewidth]{./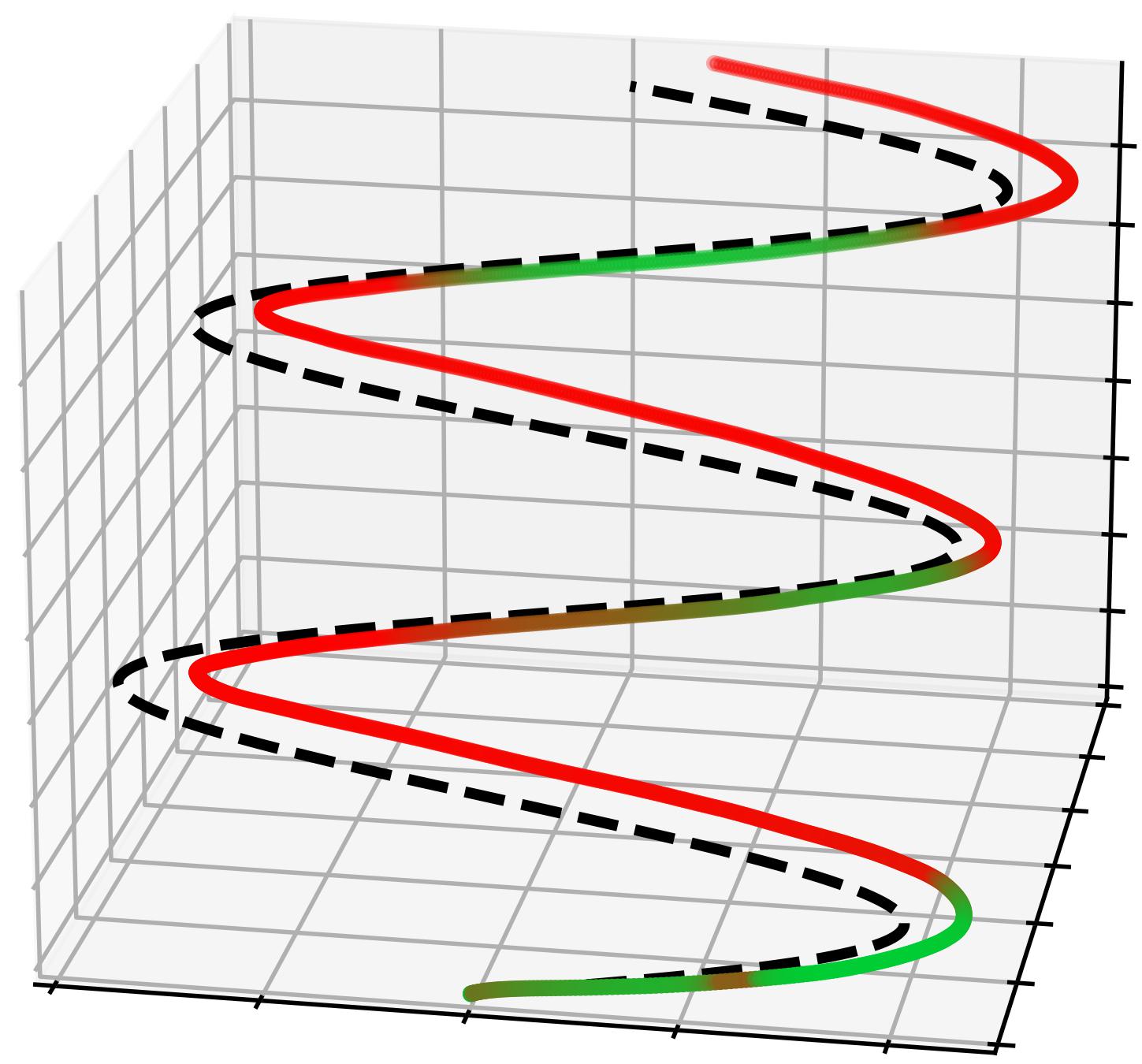}
  \end{subfigure}
      \begin{subfigure}[b]{0.2\textwidth}
    \includegraphics[width=\linewidth]{./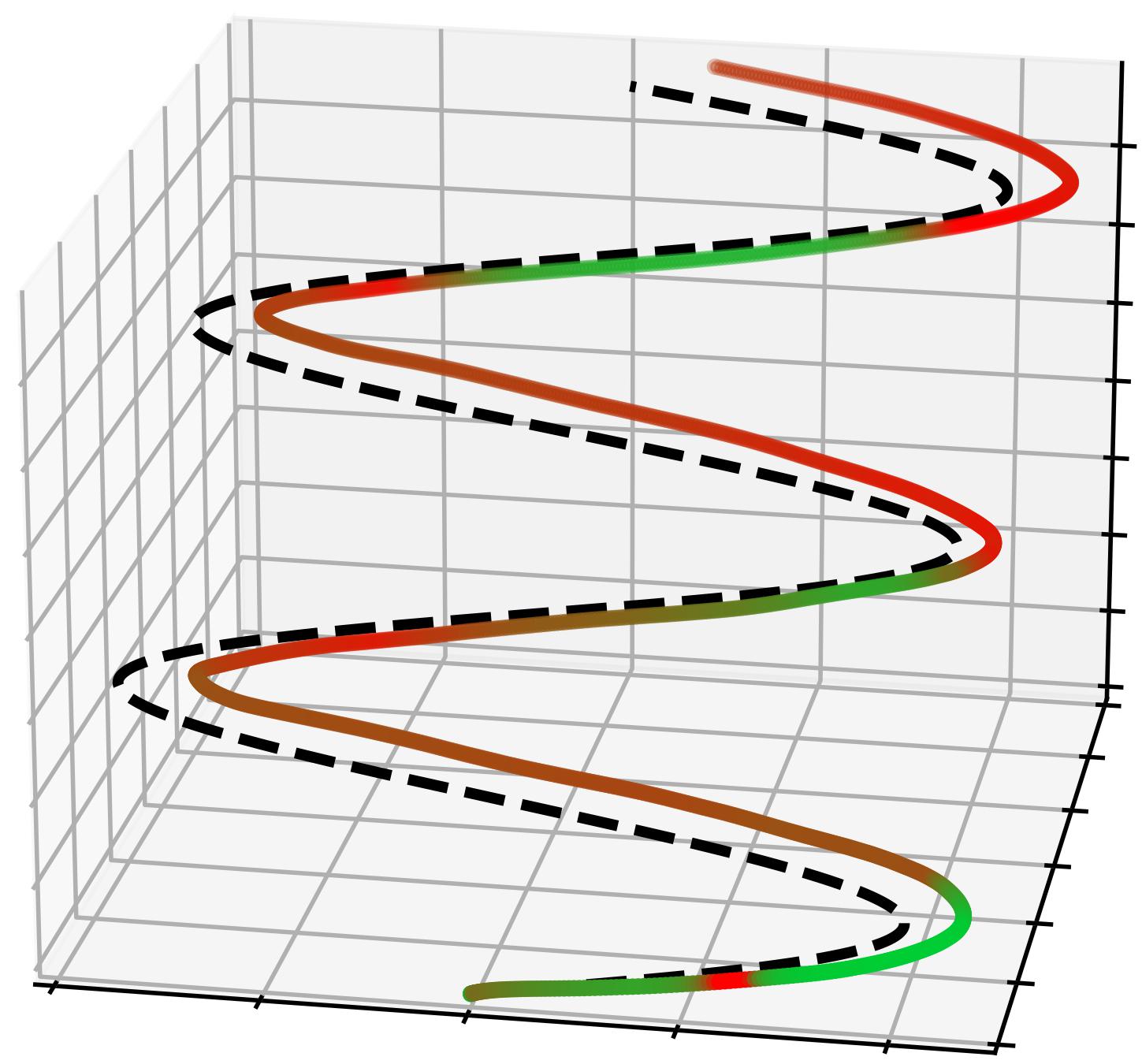}
  \end{subfigure}

  \rotatebox{90}{
\begin{minipage}[c]{.2\textwidth} 
    \centering 
    \caption*{Spiral-up} 
  \end{minipage}%
}
      \begin{subfigure}[b]{0.2\textwidth}
    \includegraphics[width=\linewidth]{./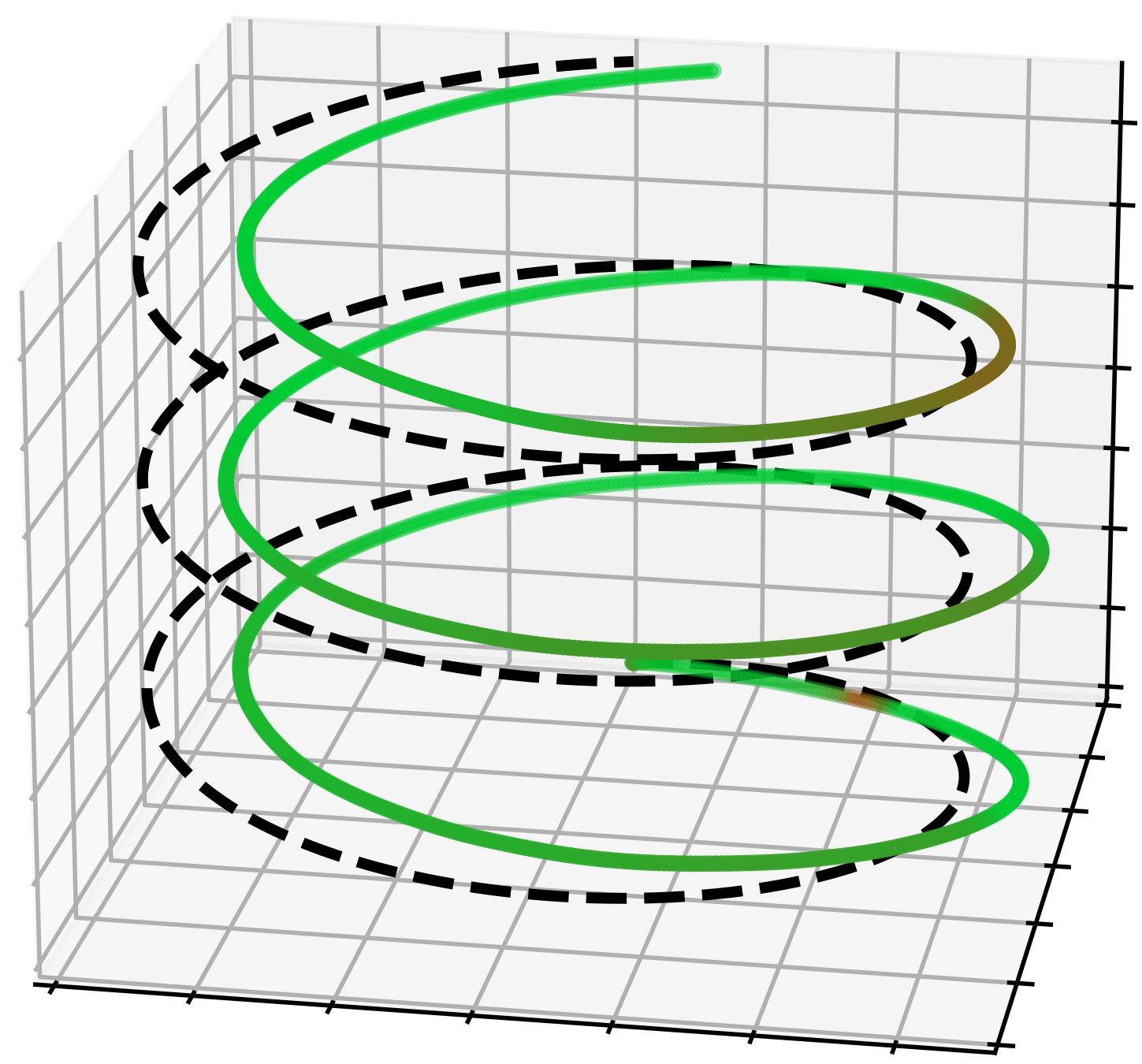}
  \end{subfigure}
      \begin{subfigure}[b]{0.2\textwidth}
    \includegraphics[width=\linewidth]{./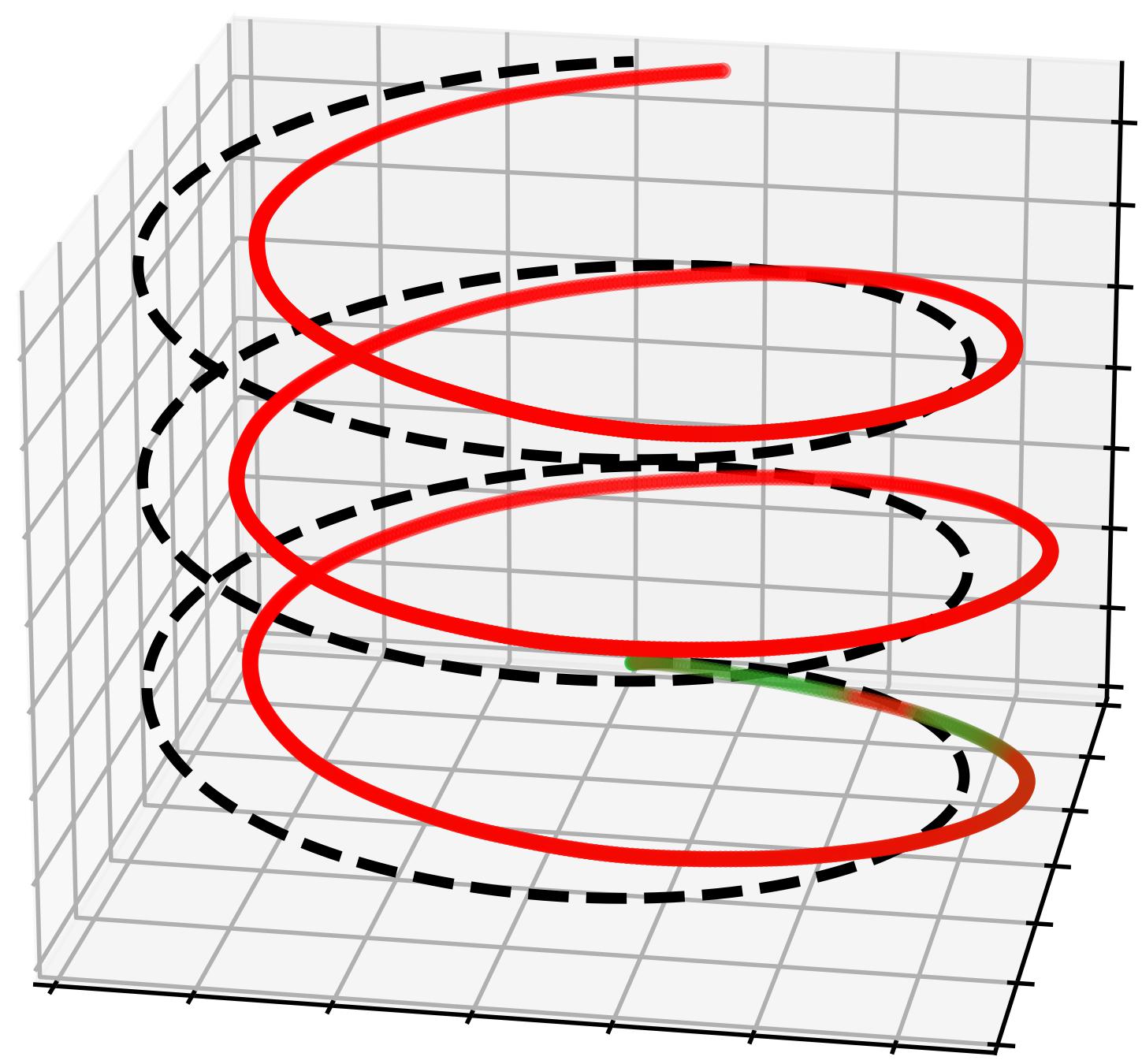}
  \end{subfigure}
      \begin{subfigure}[b]{0.2\textwidth}
    \includegraphics[width=\linewidth]{./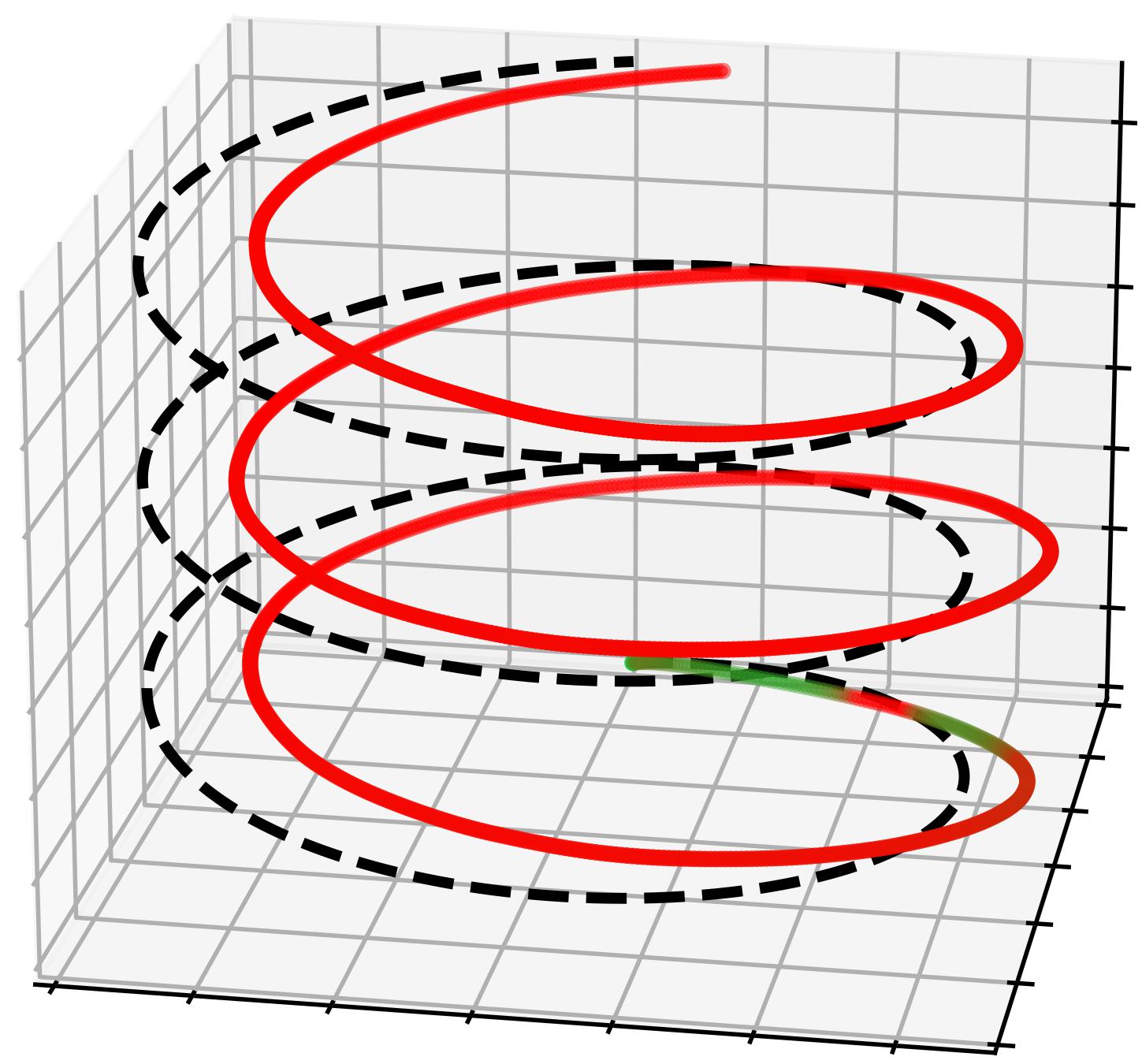}
  \end{subfigure}
      \begin{subfigure}[b]{0.2\textwidth}
    \includegraphics[width=\linewidth]{./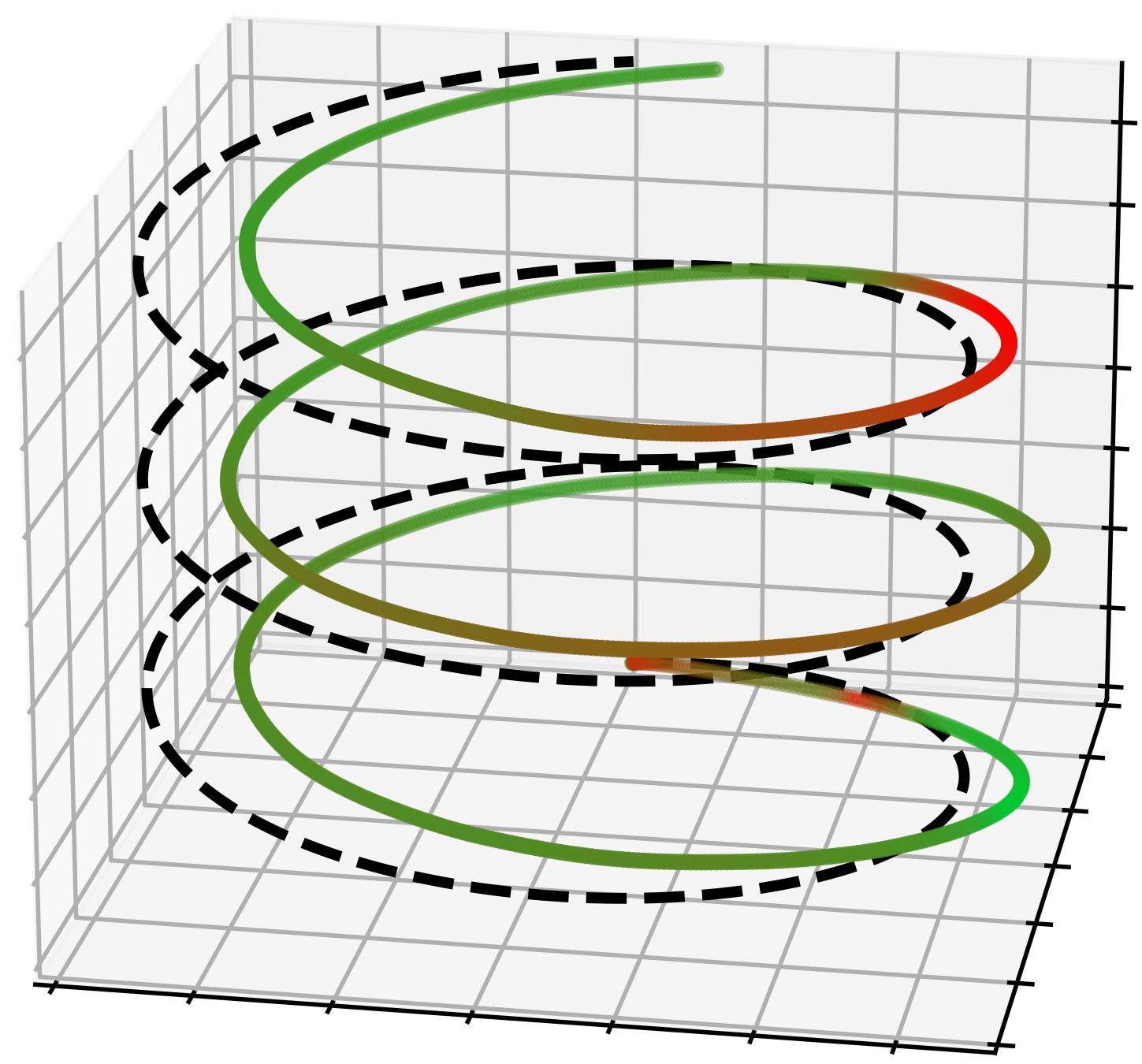}
  \end{subfigure}
  \caption{Visualizations of the quadrotor flight simulations on various trajectories. The black line is the desired trajectory. The green line denotes lower deviation from the desired trajectory while the red line denotes farther distances. BO using DIL-GP finds the best PID parameters, which enable accurate controls across all trajectories types under challenging turbulent wind conditions.}
  \label{uav_result2}
\end{figure*}

\clearpage
\section{Theoretical Analysis}

In this section, we provide theoretical analysis of the proposed DIL-GP. We begin by deriving the bound for the OOD generalization performance of DIL-GP. For the sake of simplicity and without loss of generality, we assume that the labels are generated from the target function: $y=f(x)+\epsilon$, where $\epsilon \sim N(0, \sigma^2)$.

\subsection{Generalization error optimization of DIL-GP}

It is reasonable to assume that $f \in L^2$, which is 
indispensable for the following proof. On this basis, Mercer's theorem allows us to decompose the kernel function on the basis $\phi_i(x)$: $k_{\theta}(x_i, x_j) = \sum_{p=1}^{\infty} \lambda_p \phi_p(x_i) \phi(x_j)$, where $\phi_i(x)$ are the eigenfunctions of the operator $Lf(x)=\int k(x, s)f(s)d\rho(s)$with corresponding eigenvalues $\lambda_1 \geq \lambda_2 \geq\ldots \geq 0$. This set also forms a basis, on which we can decompose any function $f(x)$, i.e. $f(x)=\sum_{p=0}^{\infty} \mu_p\phi_p(x)$, where $<\phi_0(x), \phi_i(x)> = 0, \forall i > 0$. 
We make the following assumption,
\begin{assumption}
1)The eigenvalue $\lambda_p$ is bounded, where $C_1$ and $C_2$ are constants,
    \begin{align}
C_1 p^{-\alpha} \leq \lambda_p \leq C_2 p^{-\alpha}, \forall p \geq 1,
0 \leq C_1 \leq C_2, \alpha \geq 1
\end{align}
2)The function decomposition coefficient $\mu_p$ is bounded by the following inequality,
\begin{align}
|\mu_p| \leq C_1 p^{-\beta}, \forall p \geq 1
|\mu_p| \geq C_2 p^{-\beta}, \forall p \geq 1, 
\beta > 1/2, C_1, C_2 > 0
\end{align}
3)$\sigma^2$ and $n^t$ are of the same order:
\begin{align}
\sigma^2=\Theta(n^t), 1-\frac{\alpha}{1+2\tau}<t<1
\end{align}
4)The eigenfunction set $\{\phi_i\}_{i=1}^\infty$ belongs to the union of eigenfunction sets

\begin{align}
\{\phi_i\}_{i=1}^\infty\subset \mathop{\cup}\limits_{e=0}^1 \{\phi_i^e\}_{i=1}^\infty
\end{align}
, where $\mathop{\cup}\limits_{e=0}^1 \{\phi_i^e\}_{i=1}^\infty$ denotes the function basis across environments that is perceivable by DIL-GP, and $\{\phi_i\}_{i=1}^\infty=\{\phi_i^0\}_{i=1}^\infty$ denotes the function basis within a single environment that is perceivable by vanilla GP methods.
\end{assumption}

We denote $\muB=(\mu_0, \ldots, \mu_i, \ldots)$, $\hat{\muB}_{1:R}=(\mu_1, \ldots, \mu_R, 0, \ldots, 0)$, $\phi_p(x)=(\phi_p(x_1), \ldots, \phi_p(x_n))^T$, $\Phi = (\phi_0(\xB), \ldots, \phi_p(\xB), \ldots)$, $\eta = (\eta_0(x^*), \ldots, \eta_p(x^*), \ldots)$, $\Lambda = \{0, \lambda_1, \ldots, \lambda_p, \ldots\}$, $K_n=\Phi \Lambda \Phi^T$, $f(\xB)=\Phi \mu$

\begin{lemma}
The expectation of square loss function between $\mu(x^*)$ and $f(x^*)$ is equivalent to the square of residual eigenvalue of the function decomposition, excepting for a infinitesimal remainder: 

\begin{align}
    &\mathds{E}_{\epsilon} \mathds{E}_{x^*\sim \tilde{p}(x)} [(\mu(x^*)-f(x^*))^2]=\mu_0^2+o(1)
\end{align}
\end{lemma}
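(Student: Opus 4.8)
The plan is to evaluate the risk exactly through the Mercer decomposition and then argue that, under the conditions of Assumption~1, every contribution vanishes as $n\to\infty$ except the irreducible term produced by the out-of-kernel direction $\phi_0$. Taking the prior mean to be zero, the posterior mean at a test point is $\mu(x^*)=k(x^*,\xB)(K_n+\sigma^2 I)^{-1}y$. Writing $\eta(x^*)=(\phi_0(x^*),\phi_1(x^*),\dots)^\top$ for the vector of eigenfunction values, I would substitute the Mercer expansion $k(x^*,\xB)=\eta(x^*)^\top\Lambda\Phi^\top$, the data model $y=\Phi\muB+\epsilon$, and $f(x^*)=\eta(x^*)^\top\muB$, which yields the pointwise error
\[
\mu(x^*)-f(x^*)=\eta(x^*)^\top\big[(A\Phi-I)\muB+A\epsilon\big],\qquad A:=\Lambda\Phi^\top(K_n+\sigma^2 I)^{-1}.
\]

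First I would take $\mathds{E}_\epsilon$: as $\epsilon$ is zero-mean with covariance $\sigma^2 I$, the cross term vanishes and the risk separates into a squared-bias part and a variance part. Taking $\mathds{E}_{x^*\sim\tilde p}$ and using the population orthonormality of the eigenbasis, $\mathds{E}_{x^*}[\eta(x^*)\eta(x^*)^\top]=I$, collapses both quadratic forms into norms:
\[
\mathds{E}_\epsilon\mathds{E}_{x^*}\big[(\mu(x^*)-f(x^*))^2\big]=\big\|(A\Phi-I)\muB\big\|^2+\sigma^2\|A\|_F^2 .
\]
Next I would isolate the $\phi_0$ coordinate. Since $\lambda_0=0$, the $0$th row of $A=\Lambda\Phi^\top(\cdots)^{-1}$ is identically zero, so the $0$th row of $A\Phi-I$ equals $-\eB_0^\top$ (the negative selector of the $\phi_0$ coordinate). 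Hence the $0$th entry of $(A\Phi-I)\muB$ is exactly $-\mu_0$, contributing precisely $\mu_0^2$ to the squared bias; this term is irreducible because $\phi_0$ lies outside the span of the kernel's eigenfunctions and can never be recovered by the GP, whatever the sample size.

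The remaining task, which I expect to be the main obstacle, is to show that the $p\ge 1$ portion of the bias together with the variance $\sigma^2\|A\|_F^2$ is $o(1)$. This is exactly the excess-risk analysis of kernel ridge regression, since the posterior mean is KRR with regularization $\sigma^2$, i.e.\ effective parameter $\sigma^2/n=\Theta(n^{t-1})\to 0$ for $t<1$. I would truncate the eigenbasis at a level $R\sim n^{(1-t)/\alpha}$ and split each term into a head ($p\le R$) and a tail ($p>R$); the eigenvalue-decay condition ($\lambda_p\asymp p^{-\alpha}$), the coefficient-decay condition ($|\mu_p|\asymp p^{-\beta}$ with $\beta>1/2$), and the noise-scaling condition ($\sigma^2=\Theta(n^t)$) of Assumption~1 are calibrated so that both halves decay polynomially in $n$. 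The delicate points are (i) replacing the empirical operators $\tfrac1n\Phi^\top\Phi$ and $(K_n+\sigma^2 I)^{-1}$ by their population counterparts via a concentration argument, which is what introduces the high-probability ($1-\delta$) qualifier inherited by Theorem~\ref{theorem1}, and (ii) controlling the finite-sample leakage of $\phi_0(\xB)$ into the estimated $p\ge 1$ coordinates, since $\phi_0$ is orthogonal to the other eigenfunctions only in the population $L^2(\tilde p)$ inner product and not at the sampled points. Once both contributions are bounded by $o(1)$, adding back the $\mu_0^2$ from the $\phi_0$ direction gives the stated identity.
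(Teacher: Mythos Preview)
Your proposal is correct and follows essentially the same route as the paper: both substitute the Mercer expansion to obtain the bias--variance split, rewrite the bias as $\|(I+\sigma^{-2}\Lambda\Phi^\top\Phi)^{-1}\muB\|^2$ (equivalently your $\|(A\Phi-I)\muB\|^2$) and the variance via the trace formula, then control the $p\ge 1$ part by truncating the eigenbasis at a level $R$ balanced against the decay rates in Assumption~1, yielding the polynomial rate $\Theta(\max\{\sigma^2 n^{(1-\alpha-t)/\alpha},\, n^{(1-2\beta)(1-t)/\alpha}\})=o(1)$ with high probability. Your explicit isolation of $\mu_0^2$ via the observation that the $0$th row of $A$ vanishes because $\lambda_0=0$ is clearer than the paper's presentation (where the $\mu_0^2$ term only reappears in the proof of Theorem~\ref{theorem3}), and your flagging of the empirical-to-population concentration step and the $\phi_0$ leakage issue are exactly the technical points the paper handles implicitly when passing from $(I+\sigma^{-2}\Lambda\Phi^\top\Phi)^{-1}$ to $(I+n\sigma^{-2}\Lambda)^{-1}$ up to a $(1+o(1))$ factor.
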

\textit{Proof.}
\begin{align}
&\mathds{E}_{\epsilon} \mathds{E}_{x^*\sim \tilde{p}(x)} [(\mu(x^*)-f(x^*))^2]\\
&=\mathds{E}_{\epsilon} \mathds{E}_{x^*\sim \tilde{p}(x)}(\mu(x^*)-f(x^*))^2\\
&=\mathds{E}_{\epsilon} \mathds{E}_{x^*\sim \tilde{p}(x)}[\KB_{\theta}^{*x} (\KB_{\theta}^{xx} + \sigma^2 I)^{-1}y - f(x^*)]^2\\
&=\mathds{E}_{\epsilon} \mathds{E}_{x^*\sim \tilde{p}(x)}[\eta^T \Lambda \Phi^T[\Phi \Lambda \Phi^T + \sigma^2 I]^{-1}(\Phi \mu + \epsilon) - \eta^T \mu]^2\\
&=\mathds{E}_{\epsilon} \mathds{E}_{x^*\sim \tilde{p}(x)}[\eta^T \Lambda \Phi^T[\Phi \Lambda \Phi^T + \sigma^2 I]^{-1}\epsilon]^2+\mathds{E}_{x^*\sim \tilde{p}(x)}[\eta^T (\Lambda \Phi^T[\Phi \Lambda \Phi^T + \sigma^2 I]^{-1}\Phi-I)\mu]^2\\
&=[\sigma^2 Tr \Lambda \Phi^T(\Phi \Lambda \Phi^T+\sigma I)^{-2} \Phi \Lambda + \mu^T(I+\sigma^{-2}\Phi^T \Phi \Lambda)^{-1}(I+\sigma^{-2}\Lambda \Phi^T \Phi)^{-1} \mu]\\
&=Tr(I+\frac{\Lambda \Phi^T \Phi}{\sigma^2})^{-1}\Lambda-Tr(I+\frac{\Lambda \Phi^T \Phi}{\sigma^2})^{-2}\Lambda+
||(I+\frac{1}{\sigma^2}\Lambda \Phi^T \Phi)^{-1}\mu||^2
\end{align}
The following part discusses the two parts of the equation separately. To give the boundary of these values, it is viable to analysis the residual errors of truncated expansions. Following this thought, we have
\begin{align}
&||(I+\frac{1}{\sigma^2}\Lambda \Phi^T \Phi)^{-1}\muB|| \geq ||(I+\frac{1}{\sigma^2}\Lambda \Phi^T \Phi)^{-1}\hat{\muB}_{1:R}||-||(I+\frac{1}{\sigma^2}\Lambda \Phi^T \Phi)^{-1}(\muB-\hat{\muB}_{1:R})||\\
&||(I+\frac{1}{\sigma^2}\Lambda \Phi^T \Phi)^{-1}\muB|| \leq ||(I+\frac{1}{\sigma^2}\Lambda \Phi^T \Phi)^{-1}\hat{\muB}_{1:R}||+||(I+\frac{1}{\sigma^2}\Lambda \Phi^T \Phi)^{-1}(\muB-\hat{\muB}_{1:R})||
\end{align}
derives 
\begin{align}
&||(I+\frac{1}{\sigma^2}\Lambda \Phi^T \Phi)^{-1}\hat{\muB}_{1:R}||\\
&=\Theta(n^{(1-t)max\{-1, \frac{1-2\beta}{2\alpha}\}}log(n)^{k/2})\\
&=(1+o(1))||(I+\frac{n}{\sigma^2}\Lambda)^{-1}\hat{\muB}_{1:R}||
\end{align}
Finally, with probability of at least $1-\delta$, 
\begin{align}
&||(I+\frac{1}{\sigma^2}\Lambda \Phi^T \Phi)^{-1}\muB||\\
&=O(n^{max{t-1, \frac{(1-2\beta)(1-t)}{2\alpha}}}log(n)^{k/2})\\
&=(1+o(1))||(I+\frac{n}{\sigma^2}\Lambda)^{-1}\hat{\muB}_{1:R}||\\
&=(1+o(1))||(I+\frac{n}{\sigma^2}\Lambda)^{-1}\muB_R||
\end{align}
, where

\begin{equation}
k= \left \{
\begin{array}{cc}
0, 2\alpha&\neq 2\beta-1\\
1, 2\alpha&= 2\beta-1
\end{array}
\right.
\end{equation}

On the other hand, with probability of at least $1-\delta$, 
\begin{align}
&Tr(I+\frac{\Lambda \Phi^T \Phi}{\sigma^2})^{-1}\Lambda-Tr(I+\frac{\Lambda \Phi^T \Phi}{\sigma^2})^{-2}\Lambda\\
&=(Tr(I+\frac{n}{\sigma^2} \Lambda)^{-1} \Lambda)(1+o(1))-||\Lambda^{1/2}(I+\frac{n}{\sigma^2}\Lambda)^{-1}||^2(1+o(1))\\
&=\Theta(n^{\frac{(1-\alpha)(1-t)}{\alpha}})
\end{align}

Combing the two parts of equations, we achieve the following conclusion:
\begin{align}
&\mathds{E}_{\epsilon} \mathds{E}_{x^*\sim \tilde{p}(x)} [(\mu(x^*)-f(x^*))^2]\\
&=Tr(I+\frac{\Lambda \Phi^T \Phi}{\sigma^2})^{-1}\Lambda-Tr(I+\frac{\Lambda \Phi^T \Phi}{\sigma^2})^{-2}\Lambda+||(I+\frac{1}{\sigma^2}\Lambda \Phi^T \Phi)^{-1}\mu||^2\\
&=\Theta(max(\sigma^2 n^{\frac{1-\alpha-t}{\alpha}}, n^{\frac{(1-2\beta)(1-t)}{\alpha}}))
\end{align}
The performance of DIL-GP can be clearly illustrated in the following theorem:

\setcounter{theorem}{0}

\begin{theorem}
 Under mild assumptions,  DIL-GP's  OOD risk is strictly no larger than vanilla GP's OOD risk with probability $\geq 1-\delta$. Given $\delta \in (0, 1)$, $R_{\text{DIL-GP}}= \mathds{E}_{x^*}(\mu_{\text{DIL-GP}}(x^*)-f(x^*))^2 \leq R_{\text{GP}}= \mathds{E}_{x^*}(\mu_{\text{GP}}(x^*)-f(x^*))^2$, with probability $\geq 1-\delta$.
\label{theorem3}
\end{theorem}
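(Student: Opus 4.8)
The plan is to reduce the theorem to two applications of Lemma~1, one for each predictor, and then compare the resulting residual terms using the nesting of perceivable function bases granted by Assumption~4(4). By Lemma~1, the OOD risk of any GP-type predictor with posterior mean $\mu(\cdot)$ equals $\mu_0^2 + o(1)$, where $\mu_0$ is the coefficient of $f$ along the component $\phi_0$ that is orthogonal to the predictor's perceivable eigenbasis (recall $\langle \phi_0, \phi_i\rangle = 0$ for all $i \geq 1$). Thus the whole comparison collapses to comparing the squared magnitude of the \emph{unperceivable} part of $f$ for the two methods, and I would write
$$R_{\text{GP}} = (\mu_0^{\text{GP}})^2 + o(1), \qquad R_{\text{DIL-GP}} = (\mu_0^{\text{DIL-GP}})^2 + o(1),$$
where $\mu_0^{\text{GP}}$ and $\mu_0^{\text{DIL-GP}}$ are the residual coefficients relative to the vanilla-GP basis $\{\phi_i^0\}_{i\geq 1}$ and the DIL-GP basis $\cup_{e=0}^1\{\phi_i^e\}_{i\geq 1}$, respectively.

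Next I would make the geometric comparison precise. Let $\Hcal_{\text{GP}} = \overline{\text{span}}\{\phi_i^0 : i\geq 1\}$ and $\Hcal_{\text{DIL}} = \overline{\text{span}}\{\phi_i^e : e\in\{0,1\},\, i\geq 1\}$ be the closed subspaces of $L^2$ spanned by the two perceivable bases. Assumption~4(4) states exactly that $\{\phi_i\}_{i\geq1}\subset \cup_{e=0}^1\{\phi_i^e\}_{i\geq1}$, i.e. $\Hcal_{\text{GP}} \subseteq \Hcal_{\text{DIL}}$. The residual coefficient is the norm of the orthogonal projection of $f$ onto the complement, so $(\mu_0^{\text{GP}})^2 = \|P_{\Hcal_{\text{GP}}^\perp} f\|^2$ and $(\mu_0^{\text{DIL-GP}})^2 = \|P_{\Hcal_{\text{DIL}}^\perp} f\|^2$. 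Since $\Hcal_{\text{GP}}\subseteq\Hcal_{\text{DIL}}$ implies $\Hcal_{\text{DIL}}^\perp\subseteq\Hcal_{\text{GP}}^\perp$, monotonicity of orthogonal projection onto nested subspaces gives $\|P_{\Hcal_{\text{DIL}}^\perp} f\| \leq \|P_{\Hcal_{\text{GP}}^\perp} f\|$, hence $(\mu_0^{\text{DIL-GP}})^2 \leq (\mu_0^{\text{GP}})^2$. This is the heart of the claim: enlarging the perceivable basis can only shrink the part of $f$ that no fit can represent.

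Finally I would handle the probabilistic and asymptotic bookkeeping. Each invocation of Lemma~1 holds on an event of probability at least $1-\delta'$ (the concentration steps in the proof of Lemma~1 are where the randomness enters); choosing $\delta' = \delta/2$ and taking a union bound yields the stated $1-\delta$ confidence. On that event,
$$R_{\text{DIL-GP}} - R_{\text{GP}} = (\mu_0^{\text{DIL-GP}})^2 - (\mu_0^{\text{GP}})^2 + o(1) \leq o(1),$$
and because the leading residual difference is non-positive, the inequality holds up to the vanishing remainder; it becomes an equality in the limit and is strict whenever the additional basis $\{\phi_i^1\}$ carries nonzero mass of $f$, which is the regime the theorem targets.

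The main obstacle I anticipate is not the projection monotonicity, which is elementary, but rather justifying that the adversarial min--max partitioning actually \emph{realizes} the enlarged basis $\cup_{e}\{\phi_i^e\}$ that Assumption~4(4) merely posits. Tying the algorithm's worst-case environment splitting to the claim that DIL-GP's effective kernel spans cross-environment eigenfunctions, while vanilla GP is confined to a single environment's basis, is the substantive modeling step hidden inside the ``mild assumptions,'' and it is also where one would have to work to upgrade the conclusion from $\leq$ to a genuinely strict inequality.
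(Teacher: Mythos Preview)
Your proposal is correct and follows essentially the same route as the paper: apply Lemma~1 to each predictor to reduce the risk to $\mu_0^2+o(1)$, then compare the residuals via the nesting of perceivable bases in Assumption~1(4), which the paper expresses concretely as $R_{\text{DIL-GP}}=\lambda_0^2\leq\lambda_0^2+\sum_i(\lambda_i^1)^2=R_{\text{GP}}$. Your projection-monotonicity phrasing and explicit union bound are in fact a bit more careful than the paper's own argument, which silently drops the $o(1)$ terms in the final display and does not track the two $1-\delta$ events separately.
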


\textit{Proof.}
According to Lemma 1, with
probability of at least 1-$\delta$ we have
\begin{align}
\mathds{E}_{(x^*, y^*)}(\mu(x^*)-y^*)^2-\sigma^2 = \mathds{E}_{x^*}(\mu(x^*)-f(x^*))^2 =\Theta (n^{\frac{(1-\alpha)(1-t)}{\alpha}}) + \mu_0^2 + o(1) = \mu_0^2 + o(1)
\end{align}

For a random function in the Hilbert space $L^2$, the kernel function generated by DIL-GP ensures more detailed decomposition of the function, producing a larger norm of the function. The environment enhancement of DIL-GP facilitates more efficient training outcomes. The algorithm ensures that the eigenfunctions of $K_\theta$ capture additional information from the two distributions of input data. 
Therefore, function f can be expressed by the series of function basis, i.e.
\begin{align}
f=\sum\limits_{e=0}^1 \sum\limits_{i=1}^\infty\lambda_i^e \phi_p^e+\lambda_0 \phi_0
\end{align}

The function basis$ \{\phi_i^1\}_{i=1}^\infty$ is equivalent to the function basis $\{\phi_i\}_{i=1}^\infty$
On this basis, the richer function basis considered by DIL-GP generates the following result:
\begin{align}
R_{\text{DIL-GP}}= \mathds{E}_{x^*}(\mu_\text{DIL-GP}(x^*)-f(x^*))^2=\lambda_0^2 \leq \lambda_0^2 + \sum\limits_{i=1}^\infty ({\lambda_i^1})^2= \mathds{E}_{x^*}(\mu_\text{GP}(x^*)-f(x^*))^2
\end{align}

\section{Convergence proof of DIL-BO}

In this section, we present the convergence analysis of the proposed domain invariant learning for Bayesian optimization. Bayesian optimization aims to find the parameter for maximizing an unknown function $f: \Dcal \rightarrow \Rcal$. In each iteration $t=1 \cdots T$, an input $x_t \in \Dcal$ is queried to get the output $y_t = f(x_t)+\epsilon$ where $\epsilon \sim \Ncal(0, \sigma^2)$ where $\Ncal(0, \sigma^2)$ is a Gaussian noise with variance $\sigma^2$. The performance of DIL-BO is measured by the cumulative regret $R_{T}=\sum_{1\cdots T}[f(x_*)-f(x_t)]$, where $x_*=\argmax_{x \in \Dcal} f(x)$ is the global minimizer of $f(x)$. We want to give an upperbound on the cumulative regret $R_{T}$. We also assumes $f$ lies in the reproduced kernel Hilbert (RKHS) space associated with the DIL GP's kernel $k$ and the norm induced by this RKHS is bounded: $\|f\|_{k} \leq B$.

First, we will give some lemmas used in the proof of the theorem.
\begin{lemma}
Given $\delta \in (0,1)$ and $\beta_t=B+\sigma\sqrt{2(\gamma_{t-1}+1+\log(4/\delta)}$, then
\begin{equation}
\lvert f(x) - \mu_{t-1}(x)\rvert \leq \beta_t \sigma_{t-1}(x)
\end{equation}
which holds with probability $\geq 1-\delta/4$.
\label{lemma:f_bound}
\end{lemma}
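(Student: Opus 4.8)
The plan is to recognize this as the standard reproducing-kernel-Hilbert-space (RKHS) confidence bound for kernelized bandits, and to prove it by the self-normalized ``method of mixtures'' argument (Abbasi-Yadkori et al.\ in the linear case, extended to kernels by Chowdhury and Gopalan), specialized to Gaussian noise of scale $\sigma$ and a failure budget of $\delta/4$. First I would pass to the feature space of $k$, writing $k(x,x')=\langle\phi(x),\phi(x')\rangle$ and, since $\|f\|_k\le B$, representing the target as $f(x)=\langle\theta,\phi(x)\rangle$ with $\|\theta\|\le B$. Stacking the queried features as the operator $\Phi_{t-1}=(\phi(x_1),\ldots,\phi(x_{t-1}))^{T}$ so that $Y_{t-1}=\Phi_{t-1}\theta+\epsilon_{t-1}$, and introducing the regularized covariance $V_{t-1}=I+\Phi_{t-1}^{T}\Phi_{t-1}$, the usual kernel identities give $\mu_{t-1}(x)=\phi(x)^{T}V_{t-1}^{-1}\Phi_{t-1}^{T}Y_{t-1}$ and, under the normalization in which the surrogate regularization matches $V_{t-1}$, the posterior deviation $\sigma_{t-1}(x)=\|\phi(x)\|_{V_{t-1}^{-1}}=\sqrt{\phi(x)^{T}V_{t-1}^{-1}\phi(x)}$.

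The second step is the exact error decomposition
\begin{equation}
f(x)-\mu_{t-1}(x)=\phi(x)^{T}V_{t-1}^{-1}\theta-\phi(x)^{T}V_{t-1}^{-1}\Phi_{t-1}^{T}\epsilon_{t-1}, \nonumber
\end{equation}
which follows by substituting $Y_{t-1}=\Phi_{t-1}\theta+\epsilon_{t-1}$ and simplifying with $\Phi_{t-1}^{T}\Phi_{t-1}=V_{t-1}-I$. The first (deterministic) term is the ``bias'': by Cauchy--Schwarz in the $V_{t-1}^{-1}$ inner product it is at most $\|\phi(x)\|_{V_{t-1}^{-1}}\,\|\theta\|_{V_{t-1}^{-1}}$, and since $V_{t-1}^{-1}$ is dominated by the identity we have $\|\theta\|_{V_{t-1}^{-1}}\le\|\theta\|\le B$, so the bias is bounded by $B\,\sigma_{t-1}(x)$ with no appeal to randomness.

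The core of the proof --- and the step I expect to be the main obstacle --- is the second (``noise'') term, which after Cauchy--Schwarz factors as $\sigma_{t-1}(x)\cdot\|\Phi_{t-1}^{T}\epsilon_{t-1}\|_{V_{t-1}^{-1}}$. Because the $\epsilon_s$ form a martingale-difference sequence adapted to the filtration generated by the adaptively chosen queries $x_1,\ldots,x_{t-1}$, I would control the self-normalized sum $\|\Phi_{t-1}^{T}\epsilon_{t-1}\|_{V_{t-1}^{-1}}$ by constructing a nonnegative supermartingale from the exponential of the noise and integrating it against a Gaussian mixture; this yields, uniformly in $t$ and with probability at least $1-\delta/4$, a bound of order $\sigma\sqrt{2\log(\det(V_{t-1})^{1/2})+2\log(4/\delta)}$. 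The genuine difficulty here is making this concentration argument rigorous in a possibly infinite-dimensional RKHS rather than in $\Rcal^{d}$: one must verify the supermartingale property and justify the mixture integral in that setting, which is the heart of the Chowdhury--Gopalan extension.

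Finally I would convert the log-determinant into the maximum information gain through $\tfrac12\log\det(I+\Phi_{t-1}^{T}\Phi_{t-1})=\tfrac12\log\det(I+K_{t-1})\le\gamma_{t-1}$, so that the noise bound becomes $\sigma\sqrt{2(\gamma_{t-1}+1+\log(4/\delta))}\,\sigma_{t-1}(x)$, the extra $+1$ absorbing the regularization bookkeeping. Adding the bias bound $B\,\sigma_{t-1}(x)$ and factoring out $\sigma_{t-1}(x)$ reproduces exactly $\beta_t\,\sigma_{t-1}(x)$ with the stated $\beta_t$. The confidence statement is automatically uniform over all $x\in\Dcal$, since the only stochastic quantity, $\|\Phi_{t-1}^{T}\epsilon_{t-1}\|_{V_{t-1}^{-1}}$, does not depend on $x$; the decomposition, the bias bound, and the information-gain substitution are all routine, leaving the self-normalized concentration as the sole substantive step.
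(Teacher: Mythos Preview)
Your proposal is correct and is precisely an outline of the proof of Theorem~2 in Chowdhury--Gopalan (2017), which is exactly what the paper invokes: the paper does not give its own argument for this lemma but simply states that it ``directly follows Theorem~2 from \cite{Sayak2017}.'' Thus you and the paper take the same route, with the only difference being that you actually sketch the self-normalized concentration argument that the paper outsources to the citation.
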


This lemma directly follows Theorem 2 from \cite{Sayak2017}.

\begin{lemma}
Given $\delta \in (0,1)$, suppose the algorithm is run with parameters $\beta_t$, then with probability$\geq 1-3/4\delta$, for $t \geq 1$:
\begin{equation}
    r_t \leq 2\beta_t \sigma_{t-1}(x_t)
\end{equation}
\label{lemma:rtbound}
\end{lemma}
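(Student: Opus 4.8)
The plan is to establish Lemma~\ref{lemma:rtbound} as the standard per-step regret bound for upper-confidence-bound acquisition, built directly on the confidence interval of Lemma~\ref{lemma:f_bound}. In the maximization convention of this subsection, where $x_* = \argmax_{x \in \Dcal} f(x)$, the relevant acquisition is the upper confidence bound $\mathrm{UCB}_t(x) = \mu_{t-1}(x) + \beta_t \sigma_{t-1}(x)$, and the query point satisfies $x_t = \argmax_{x \in \Dcal} \mathrm{UCB}_t(x)$. I would carry out the entire argument on the event that the two-sided bound $\lvert f(x) - \mu_{t-1}(x)\rvert \leq \beta_t \sigma_{t-1}(x)$ of Lemma~\ref{lemma:f_bound} holds at the points entering the analysis, and then account for the probability of that event at the end.

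The mathematical core is a three-line sandwich. First, the lower half of Lemma~\ref{lemma:f_bound} applied at the true maximizer gives $f(x_*) \leq \mu_{t-1}(x_*) + \beta_t \sigma_{t-1}(x_*)$. Second, optimality of $x_t$ for $\mathrm{UCB}_t$ yields $\mu_{t-1}(x_*) + \beta_t \sigma_{t-1}(x_*) \leq \mu_{t-1}(x_t) + \beta_t \sigma_{t-1}(x_t)$. Third, the upper half of Lemma~\ref{lemma:f_bound} at the query point gives $\mu_{t-1}(x_t) - \beta_t \sigma_{t-1}(x_t) \leq f(x_t)$. Chaining these three inequalities,
\begin{equation}
r_t = f(x_*) - f(x_t) \leq \big(\mu_{t-1}(x_t) + \beta_t \sigma_{t-1}(x_t)\big) - \big(\mu_{t-1}(x_t) - \beta_t \sigma_{t-1}(x_t)\big) = 2\beta_t \sigma_{t-1}(x_t),
\end{equation}
which is exactly the claimed bound. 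Note that the posterior mean at $x_t$ cancels, so no quantitative control of $\mu_{t-1}$ is needed beyond the interval of Lemma~\ref{lemma:f_bound}.

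The step I expect to be the genuine obstacle — and the reason the guarantee is stated with probability $\geq 1 - \tfrac{3}{4}\delta$ rather than the $\geq 1 - \tfrac{1}{4}\delta$ of Lemma~\ref{lemma:f_bound} — is invoking the confidence bound at the \emph{unknown, data-dependent} maximizer $x_*$ when $\Dcal$ is continuous. Lemma~\ref{lemma:f_bound} is most safely read as a pointwise statement, so to apply it at $x_*$ I would introduce for each $t$ a finite discretization $\Dcal_t \subset \Dcal$ whose resolution is refined as $t$ grows, apply the confidence bound uniformly over $\Dcal_t$ and over the queried points via a union bound, and transfer the estimate from the nearest grid point to $x_*$ using Lipschitz continuity of $f$, $\mu_{t-1}$, and $\sigma_{t-1}$. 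Allocating a $\delta/4$ budget to each of the three failure events — the confidence bound at the queries, the confidence bound on the grids across all $t$, and the smoothness/Lipschitz event for the sample path — and combining by a union bound yields the stated $\geq 1 - \tfrac{3}{4}\delta$; the discretization error is absorbed into $\beta_t$ so that the final inequality keeps the clean form $r_t \leq 2\beta_t \sigma_{t-1}(x_t)$. If instead Lemma~\ref{lemma:f_bound} is taken to hold uniformly in $x$ (as in the cited \cite{Sayak2017} result), the sandwich already delivers the conclusion and the extra budget is merely conservative bookkeeping reserving probability mass for the subsequent summation-over-$t$ step in Theorem~\ref{theorem2}.
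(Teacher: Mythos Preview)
Your proposal is correct and follows essentially the same route as the paper: apply Lemma~\ref{lemma:f_bound} at $x_*$, invoke the UCB selection rule to pass to $x_t$, and apply Lemma~\ref{lemma:f_bound} again at $x_t$ to obtain $r_t \leq 2\beta_t \sigma_{t-1}(x_t)$. Your discussion of the probability budget (the $1-\tfrac{3}{4}\delta$ versus $1-\tfrac{1}{4}\delta$ discrepancy) is in fact more careful than the paper's own proof, which only invokes the $1-\delta/4$ event from Lemma~\ref{lemma:f_bound} and does not spell out where the remaining budget is spent.
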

\begin{proof}
Given $\delta \in (0,1)$, the optimum regret $f(x^*)$ follows with probability$\geq 1 - \delta / 4$,
\begin{align}
f(x^*) \leq \mu_{t-1}(x^*) + \beta_t \sigma_{t-1}(x^*) \leq \mu_{t-1}(x) + \beta_t \sigma_{t-1}(x) 
\end{align}
The first inequality is due to Lemma~\ref{lemma:f_bound}. The second inequality is due to the policy for selecting $x_t$ in seeking maximum.
Then, the instantaneous regret $r_t$ is upper-bounded by
\begin{align}
r_t = f(x^*) - f(x_t) \leq \mu_{t-1}(x) + \beta_t \sigma_{t-1}(x) -f(x_t) \leq 2\beta_t \sigma_{t-1}(x_t)
\end{align}
\end{proof}

\begin{lemma}
Let $\mathbf{f_T}$ and $\mathbf{y_T}$ be the function values and noisy observations after $T$ iterations. Then, the information gain about $f$ from the first $T$ iterations is:
\begin{equation}
    I(\mathbf{y_T};\mathbf{f_T}) = \frac{1}{2} \sum_{t=1}^{T} \log{[1+\sigma^{-2}\sigma^2_{t-1}(x_t)]}
\end{equation}
\end{lemma}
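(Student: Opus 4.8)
The plan is to compute the mutual information directly from its definition in terms of differential entropies and then collapse the resulting $T$-dimensional Gaussian determinant into a sum of per-step predictive variances via the chain rule.

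First I would write $I(\mathbf{y_T}; \mathbf{f_T}) = H(\mathbf{y_T}) - H(\mathbf{y_T} \mid \mathbf{f_T})$. Since $\mathbf{y_T} = \mathbf{f_T} + \boldsymbol{\epsilon}$ with $\boldsymbol{\epsilon} \sim \Ncal(\mathbf{0}, \sigma^2 I)$ independent of $\mathbf{f_T}$, conditioning on $\mathbf{f_T}$ leaves only the noise, so $H(\mathbf{y_T} \mid \mathbf{f_T}) = H(\boldsymbol{\epsilon}) = \frac{1}{2}\log\det(2\pi e\, \sigma^2 I)$. Under the GP prior $\mathbf{f_T} \sim \Ncal(\mathbf{0}, K_T)$ the marginal $\mathbf{y_T}$ is Gaussian with covariance $K_T + \sigma^2 I$, so $H(\mathbf{y_T}) = \frac{1}{2}\log\det(2\pi e\,(K_T + \sigma^2 I))$. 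Subtracting gives the closed form $I(\mathbf{y_T}; \mathbf{f_T}) = \frac{1}{2}\log\det(I + \sigma^{-2} K_T)$, which already shows the quantity is well-defined and deterministic.

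Second, rather than expanding this determinant directly, I would invoke the chain rule for mutual information to write $I(\mathbf{y_T}; \mathbf{f_T}) = \sum_{t=1}^{T} I(y_t; \mathbf{f_T} \mid y_{1:t-1})$ and evaluate each summand as a difference of scalar entropies $I(y_t; \mathbf{f_T} \mid y_{1:t-1}) = H(y_t \mid y_{1:t-1}) - H(y_t \mid \mathbf{f_T}, y_{1:t-1})$. The second term again reduces to the scalar noise entropy $\frac{1}{2}\log(2\pi e \sigma^2)$, because once $\mathbf{f_T}$ (hence $f(x_t)$) is known, $y_t$ is just $f(x_t)$ plus fresh independent noise. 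For the first term I would use the defining property of GP regression: conditioned on the first $t-1$ noisy observations, the predictive distribution of $f(x_t)$ is Gaussian with posterior variance $\sigma_{t-1}^2(x_t)$, so $y_t \mid y_{1:t-1}$ is Gaussian with variance $\sigma^2 + \sigma_{t-1}^2(x_t)$ and $H(y_t \mid y_{1:t-1}) = \frac{1}{2}\log(2\pi e\,(\sigma^2 + \sigma_{t-1}^2(x_t)))$. Combining the two entropies, each summand equals $\frac{1}{2}\log(1 + \sigma^{-2}\sigma_{t-1}^2(x_t))$, and summing over $t$ yields the claimed identity.

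The only point requiring genuine care is the second step: one must justify that the posterior variance $\sigma_{t-1}^2(x_t)$ entering the predictive entropy is exactly the GP quantity used throughout the regret analysis and that, for a Gaussian process, it is a deterministic function of the query locations $x_1, \dots, x_t$ rather than of the realized observation values. This is precisely what makes $H(y_t \mid y_{1:t-1})$ a constant and lets the chain rule telescope cleanly into the determinant expansion $\log\det(I + \sigma^{-2} K_T) = \sum_t \log(1+\sigma^{-2}\sigma_{t-1}^2(x_t))$. I expect this bookkeeping, rather than any nontrivial inequality, to be the main thing to get right; the remaining manipulations are direct one-dimensional Gaussian entropy computations.
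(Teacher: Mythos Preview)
Your proof is correct and is precisely the standard argument; the paper itself does not give a proof but simply cites Lemma~5.3 of Srinivas et al.\ (2009), whose derivation is exactly the entropy/chain-rule computation you outline. There is nothing to compare: you have filled in what the paper defers to the reference.
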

This lemma directly follows Lemma 5.3 from \cite{Niranjan2009}.

\begin{lemma}
Suppose the BO is run with $\beta_t$, define the maximum information gain as $\gamma_T = \max_{A \in \Dcal, \lvert A \rvert =T} I(\mathbf{y_{A}}; \mathbf{f_{A}})$ in which $\mathbf{y_{A}}$ and $\mathbf{f_{A}}$ represent  observations and function values from a set $A$ of input sizes $T$. Then,
\begin{equation}
    \sum_{t=1}^{T}[2\beta_t \sigma_{t-1}(x_t)]^2 \leq C_1 \beta^2_T \gamma_T
\end{equation}
where $C_1=\frac{8}{\log{1+\sigma^{-2}}}$.
\label{lemma:betat_ub}
\end{lemma}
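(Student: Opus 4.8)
The plan is to reduce the weighted sum of posterior variances to the information gain, following the standard GP-UCB argument. First I would expand the square and write
\begin{equation}
\sum_{t=1}^{T}[2\beta_t \sigma_{t-1}(x_t)]^2 = 4\sum_{t=1}^{T}\beta_t^2 \sigma_{t-1}^2(x_t).
\nonumber
\end{equation}
Since $\gamma_{t-1}$ is nondecreasing in $t$, the schedule $\beta_t = B + \sigma\sqrt{2(\gamma_{t-1}+1+\log(4/\delta))}$ is itself nondecreasing, so $\beta_t \le \beta_T$ for every $t \le T$. This lets me pull the largest confidence width out of the sum and bound the expression by $4\beta_T^2 \sum_{t=1}^{T}\sigma_{t-1}^2(x_t)$, after which the remaining task is purely to control $\sum_t \sigma_{t-1}^2(x_t)$.

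The key step is an elementary inequality converting each posterior variance into a logarithm. Writing $s_t = \sigma^{-2}\sigma_{t-1}^2(x_t)$, I would use that the posterior variance never exceeds the prior variance, so under the normalized kernel assumption $k(x,x)\le 1$ one has $s_t \le \sigma^{-2}$. On the interval $[0,\sigma^{-2}]$ the map $s \mapsto \log(1+s)/s$ is decreasing, hence attains its minimum at $s=\sigma^{-2}$, giving $s \le \frac{\sigma^{-2}}{\log(1+\sigma^{-2})}\log(1+s)$. Multiplying through by $\sigma^2$ yields
\begin{equation}
\sigma_{t-1}^2(x_t) \le \frac{1}{\log(1+\sigma^{-2})}\,\log\!\bigl(1+\sigma^{-2}\sigma_{t-1}^2(x_t)\bigr),
\nonumber
\end{equation}
so that summing over $t$ bounds $\sum_t \sigma_{t-1}^2(x_t)$ by $\tfrac{1}{\log(1+\sigma^{-2})}\sum_t \log(1+\sigma^{-2}\sigma_{t-1}^2(x_t))$.

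To finish, I would invoke the preceding information-gain lemma, which identifies $\tfrac12\sum_{t=1}^{T}\log[1+\sigma^{-2}\sigma_{t-1}^2(x_t)] = I(\mathbf{y_T};\mathbf{f_T})$, and then bound this by the maximum information gain, $I(\mathbf{y_T};\mathbf{f_T}) \le \gamma_T$ (the observed set has size $T$). Chaining the three bounds gives
\begin{equation}
\sum_{t=1}^{T}[2\beta_t \sigma_{t-1}(x_t)]^2 \le 4\beta_T^2 \cdot \frac{2\gamma_T}{\log(1+\sigma^{-2})} = \frac{8}{\log(1+\sigma^{-2})}\,\beta_T^2\gamma_T,
\nonumber
\end{equation}
which is exactly $C_1\beta_T^2\gamma_T$ with $C_1 = 8/\log(1+\sigma^{-2})$. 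I expect the only genuinely delicate point to be the elementary inequality in the second paragraph: it quietly relies on the posterior variance being uniformly bounded by $1$, which needs the kernel to be normalized (or at least bounded), an assumption I would state explicitly; everything else is monotonicity of $\beta_t$ and the already-established information-gain identity.
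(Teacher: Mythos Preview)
Your proposal is correct and follows essentially the same argument as the paper: pull out $\beta_T^2$ by monotonicity, apply the elementary inequality $s \le \tfrac{\sigma^{-2}}{\log(1+\sigma^{-2})}\log(1+s)$ on $[0,\sigma^{-2}]$ (using $\sigma_{t-1}^2(x_t)\le 1$), and then invoke the information-gain identity together with $I(\mathbf{y_T};\mathbf{f_T})\le\gamma_T$. Your explicit flagging of the normalized-kernel assumption is a clarification the paper's proof leaves implicit.
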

\begin{proof}
Each element inside the summation can be upperbounded by,
\begin{align}
4\beta^2_t\sigma^2_{t-1}(x_t) \leq 4\beta^2_T  \sigma^2 \sigma^{-2} \sigma^2_{t-1}(x_t) \leq 4\beta^T  \sigma^2 [\frac{\sigma^{-2}}{\log{(1+\sigma^{-2})}} \log{(1+\sigma^{-2}\sigma^2_{t-1}(x_t))}]
\end{align}
The first inequality follows since $\beta_t$ is non-decreasing. The second inequality follows as $\sigma^{-2}m \leq \frac{\sigma^{-2}}{\log{(1+\sigma^{-2})}} \log{(1+\sigma^{-2}m)}$ for all $m \in (0,1]$ and $\sigma^2_{t-1}(x_t) \in (0,1]$. Then, by taking the summation,
\begin{align}
\sum_{t=1}^{T} 4\beta^2_t\sigma^2_{t-1}(x_t) \leq \beta^2_T \frac{8}{\log{(1+\sigma^{-2})}} \sum_{t=1}^{T} [\frac{1}{2}\log{(1+\sigma^{-2}\sigma^2_{t-1}(x_t))}] = \beta^2_T \frac{8}{\log{(1+\sigma^{-2})}} I(\mathbf{y_T};\mathbf{f_T}) \leq C_1 \beta^2_T \gamma_T
\end{align}
This is by the definition of $C_1$, $I(\mathbf{y_T};\mathbf{f_T})$, and $\gamma_T$.

\end{proof}

\begin{theorem}
\textbf{(Convergence of DIL-BO)} Given $\delta \in (0,1)$, denote $\gamma_t$ the maximum information gain after observing $t$ observations. If run the BO process with $\beta_t=B+\sigma\sqrt{2(\gamma_{t-1}+1+\log(4/\delta)}$, with probability$\geq 1-\delta/4$, the upperbound of the cumulative regret $R_T$ satisfies:
\begin{equation}
    R_T = \sum_{t=1}^{T} r_t \leq \beta_T \sqrt{C_1T\gamma_T}
\end{equation}
\label{theorem4}
\end{theorem}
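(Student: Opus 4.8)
The plan is to chain the instantaneous-regret bound with the information-gain bound through a single application of the Cauchy--Schwarz inequality, following the standard GP-UCB template. First I would write the cumulative regret as $R_T=\sum_{t=1}^{T} r_t$ and apply Cauchy--Schwarz to move the sum inside a square root, obtaining $R_T \leq \sqrt{T\sum_{t=1}^{T} r_t^2}$. This trades the linear sum of regrets for a sum of \emph{squared} regrets, which is precisely the quantity controlled by Lemma~\ref{lemma:betat_ub}.

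Next I would invoke Lemma~\ref{lemma:rtbound}, which gives $r_t \leq 2\beta_t \sigma_{t-1}(x_t)$ for every $t \geq 1$, and square it to get $r_t^2 \leq [2\beta_t \sigma_{t-1}(x_t)]^2$. Substituting into the Cauchy--Schwarz bound yields $R_T \leq \sqrt{T\sum_{t=1}^{T}[2\beta_t \sigma_{t-1}(x_t)]^2}$. I would then apply Lemma~\ref{lemma:betat_ub}, which bounds $\sum_{t=1}^{T}[2\beta_t \sigma_{t-1}(x_t)]^2 \leq C_1 \beta_T^2 \gamma_T$ with $C_1 = 8/\log(1+\sigma^{-2})$; this is itself obtained from the information-gain identity together with the elementary inequality $\sigma^{-2}m \leq \frac{\sigma^{-2}}{\log(1+\sigma^{-2})}\log(1+\sigma^{-2}m)$ on $(0,1]$. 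Plugging in gives $R_T \leq \sqrt{T\cdot C_1 \beta_T^2 \gamma_T} = \beta_T\sqrt{C_1 T \gamma_T}$, since $\beta_T \geq 0$, which is the claimed bound.

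The only genuine care needed is the probability bookkeeping rather than any analytic difficulty. The confidence-bound event of Lemma~\ref{lemma:f_bound} holds with probability $\geq 1-\delta/4$, and Lemma~\ref{lemma:rtbound} inherits this event (stated there as $\geq 1-3\delta/4$); the remaining steps --- Cauchy--Schwarz, the information-gain identity, and Lemma~\ref{lemma:betat_ub} --- are deterministic and add no further failure probability. I would therefore state the conclusion on the intersection of these events and align the reported confidence level with that of the underlying confidence bound. A minor point to verify is that the normalization $\sigma^2_{t-1}(x_t)\in(0,1]$ required by Lemma~\ref{lemma:betat_ub} is consistent with the kernel scaling used for the DIL-GP surrogate; granting this, no step presents a real obstacle, and the theorem follows as a routine combination of the already-established lemmas.
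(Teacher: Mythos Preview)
Your proposal is correct and follows essentially the same route as the paper: bound each $r_t$ by $2\beta_t\sigma_{t-1}(x_t)$ via Lemma~\ref{lemma:rtbound}, apply Cauchy--Schwarz to pass to a sum of squares, and finish with Lemma~\ref{lemma:betat_ub}. The only cosmetic difference is the order in which you apply Cauchy--Schwarz and the per-step bound (you square $r_t$ first, the paper bounds $r_t$ first and then applies Cauchy--Schwarz to $\sum 2\beta_t\sigma_{t-1}(x_t)$), and your added remarks on probability bookkeeping and the $\sigma_{t-1}^2(x_t)\in(0,1]$ normalization are more careful than what the paper spells out.
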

\begin{proof}
\begin{align}
    R_T &= \sum_{t=1}^{T} r_t \leq \sum_{t=1}^{T} 2\beta_t \sigma_{t-1}(x_t) \leq \sqrt{T} \sqrt{\sum_{t=1}^{T} (2\beta_t \sigma_{t-1}(x_t))^2} \leq \sqrt{C_1 T \beta^2_T \gamma_T} = \beta_T \sqrt{C_1T\gamma_T}
\end{align}
The first inequality is due to Lemma~\ref{lemma:rtbound}. The second inequality is due to Cauchy-Schwartz inequality. The third inequality is due to Lemma~\ref{lemma:betat_ub}.
\end{proof}

\end{document}